%% file: main.tex
\documentclass[conference]{IEEEtran}
\IEEEoverridecommandlockouts
\usepackage{cite}
\usepackage{amsmath,amssymb,amsfonts}
\usepackage{amsthm}
\usepackage{algorithmic}
\usepackage{graphicx}
\usepackage{textcomp}
\usepackage{xcolor}
\usepackage{subcaption}
\usepackage{enumerate}
\usepackage[colorlinks=true, linkcolor=blue, citecolor=blue, urlcolor=blue]{hyperref}
\def\BibTeX{{\rm B\kern-.05em{\sc i\kern-.025em b}\kern-.08em
    T\kern-.1667em\lower.7ex\hbox{E}\kern-.125emX}}

\newtheorem{lemma}{Lemma}
\newtheorem*{lemma*}{Lemma}

\newif\ifanonymized
\anonymizedfalse
\usepackage{pgfplots}
\pgfplotsset{compat=1.18}
\usepackage{tikz}
\usetikzlibrary{intersections,angles,quotes,calc,arrows.meta,positioning,spy}
\newlength{\capstanunit}
\newlength{\vineunit}\newlength{\bodyouter}\newlength{\bodyinner}

\input{preambles/2d-maze-preamble}

\input{preambles/scaling-preamble}

\begin{document}

\title{An Efficient Algorithm for Minimum-Pressure Growth Planning of Vine Robots}

\ifanonymized
    \author{Author names omitted for anonymous review.}
\else
\author{
    Andres C.\ Torres$^1$, Tobia Marcucci$^{2,1}$, and Elliot W.\ Hawkes$^1$
    \thanks{$^{1}$Department of Mechanical Engineering, University of California
        Santa Barbara, CA 93106, USA}
    \thanks{$^{2}$Department of Electrical and Computer Engineering, University of
            California Santa Barbara, CA 93106, USA}
    % \thanks{This work was supported by the Air Force Office of Scientific Research award FA9550-25-1-0375.}
    \thanks{Emails: \texttt{\{actorres,marcucci,ewhawkes\}@ucsb.edu}}%
}
\fi

\newcommand{\algoname}{\texttt{\ifanonymized AnonymousPackage\else VinePlanner\fi}}

\maketitle

\begin{abstract}
    % everyone's verison
    Vine robots navigate cluttered environments by extending from their tip. Although their ability to operate in such environments has been extensively demonstrated, little work has addressed growth planning, i.e., finding optimal growth paths. Moreover, existing planners do not account for the growth pressure necessary to follow a given path, which can cause the robot to burst when it is too high. In this paper, we address the problem of finding minimum-pressure paths for vine robots growing around polytopic obstacles.  We propose an efficient algorithm that is guaranteed to find globally optimal solutions in 2D and approximate solutions in 3D, with an error that vanishes as a discretization parameter approaches zero.  First, we derive a growth pressure equation for vine robots of arbitrary shape, which we use to show that there always exists a minimum-pressure path that is piecewise-linear and can bend only at specific points on the obstacles. We then leverage this observation to reduce the growth-planning problem to a shortest-path problem with time-dependent weights, which we efficiently solve using a modified Dijkstra's algorithm.  We demonstrate the speed and scalability of our approach through numerical simulations.  We also validate our algorithm with hardware experiments and provide an open-source and high-performance implementation in the Python package \algoname.
\end{abstract}

\section{Introduction}
Soft robots are made of compliant materials, and leverage the compliance of their bodies for performance benefits~\cite{yasaoverview2023}.  One class of such robots is soft growing robots, or vine robots, which are thin-walled, cylindrical, pneumatic robots. When pressurized, a vine robot everts and lengthens from its tip~\cite{HawkesVine2017}.
Vine robots have numerous applications, including industrial inspection~\cite{HeapInspection2025}, exploration of ancient ruins~\cite{CoadRuins2020}, search and rescue~\cite{McFarlandSearch2024}, and medical applications~\cite{haggertyIntubation2025, livine2021, davymagnetic2025, KimEndoscope2026}. In many of these applications, the robot must navigate around obstacles to reach a goal. We refer to the process of finding such a path as \emph{growth planning}. Along a given path, the growth pressure increases as the robot extends or changes direction~\cite{blumenscheinmodeling2017}. If the pressure exceeds the burst pressure of the robot, the extending body can rupture. Consequently, only a small subset of paths may allow the robot to reach the goal while maintaining a safe pressure, as illustrated in Fig.~\ref{fig:2D-maze}.

\begin{figure}[t]
  \centering
  \begin{subfigure}[b]{0.5060\linewidth}
    \centering\input{figures/2D-maze-paths}%
    % Trim below the obstacle course so its frame sits on the same baseline as the photo
    % in (a). The tikzpicture's box is taller than the drawn frame (padding for
    % the corner markers, plus the line the picture sits on), so this pulls the
    % caption up to compensate. Verified by measuring the rendered PDF;
    % each 0.01 here moves the frame about 0.017 in relative to the photo.
    \vspace{-0.112\linewidth}
    \caption{}\label{fig:2D-maze-sim}
  \end{subfigure}\hfill
  \begin{subfigure}[b]{0.4690\linewidth}
    \centering
    \includegraphics[width=\linewidth]{figures/2D-maze-hardware.png}
    \caption{}\label{fig:2D-maze-hw}
  \end{subfigure}\\[6pt]
  \begin{subfigure}{\linewidth}
    \centering\input{figures/2D-maze-pressure}
    \caption{}\label{fig:maze-pressure}
  \end{subfigure}
  \caption{A 2D obstacle course with four near-optimal paths. (a) The paths are first determined from our model in simulation.  (b) The same four paths are grown in hardware and overlaid into a single image.  (c) Predicted growth pressure versus deployed length. With a hypothetical burst pressure of 25~kPa (dashed line), only the minimum-pressure path (green) is physically feasible.}
  \label{fig:2D-maze}
\end{figure}

In this paper, we propose a computationally efficient growth planner that finds globally optimal minimum-pressure paths in 2D environments.  Additionally, we provide an extension to 3D environments that produces paths optimal up to a discretization parameter.  

\subsection{Related work}
Path planning is a deeply studied problem for conventional rigid robots. Classical planners seek paths that minimize distance, traversal time, or the magnitude of trajectory derivatives (see, e.g.,~\cite{lavalle2006planning}), which are natural objectives for rigid robots.
Some of these objectives, such as minimum-distance, can also be applied to planning the growth of a vine robot. However, as demonstrated in Fig.~\ref{fig:dense-maze}, such objectives can result in disastrously high pressures, since growth pressure is not determined solely by path length, but is often dominated by the total path angle, as explained in Section~\ref{sec:growth_eq}.

Existing works on growth planning for vine robots have focused on objectives other than growth pressure. The planning algorithm in~\cite{greerrobust2020} minimizes uncertainty in reaching the goal. To achieve this, the robot is shaped one turn at a time by enumerating several candidate turns and selecting the one that is most likely to reach the next waypoint despite uncertainty. The approach in~\cite{GaoParallel2026} builds on~\cite{greerrobust2020} by incorporating a vine robot simulation with the Stable Sparse-RRT* algorithm~\cite{LiSST2016} to find paths that minimize the number of turns that must be manufactured in the robot.  They assume constant growth pressure along the path, which may not be appropriate in some applications.

\subsection{Contributions}
This work has three main contributions:
\begin{itemize}
\item A novel growth pressure equation that applies to the shape of any vine robot.
\item An efficient growth-planning algorithm that minimizes growth pressure. This method is guaranteed to identify a globally optimal path in 2D, and unlike previous methods it is also applicable in 3D.
\item An open-source and high-performance implementation of our method in the Python package \algoname.
\end{itemize}

\section{Growth Pressure Equation}
\label{sec:growth_eq}
In this section, we start from a previous growth pressure equation and derive a generalized equation to handle a vine robot made of multiple straight segments and turns.  We then extend this equation to calculate the growth pressure of any vine robot shape, which we represent as a piecewise-smooth path.

\subsection{Previous growth pressure equation}
Blumenschein et al.~\cite{blumenscheinmodeling2017} model the growth pressure of a vine robot along a straight path as
\begin{equation}
    P = Y+(T+\lambda L)/A, \label{eq:laura-straight}
\end{equation}
and around a single turn of constant curvature as
\begin{equation}
    P = Y+Te^{\mu\theta} / A. \label{eq:laura-curve}
\end{equation}
The symbols in these equations represent the following quantities:
\begin{itemize}
    \item $P\in\mathbb{R}_+$ is the growth pressure,
    \item $A\in\mathbb{R}_+$ is the cross-sectional area of the vine robot,
    \item $Y\in\mathbb{R}_+$ is the yield pressure required to evert the vine,
    \item $\lambda\in\mathbb{R}_+$ is a length-dependent friction term (note we combine several terms from~\cite{blumenscheinmodeling2017} into this single term); this term accounts for the friction force in straight sections between the inside of the vine robot body and the ``tail", or material passing through the core of the body to evert at the tip,
    \item $L\in\mathbb{R}_+$ is the total length of the vine robot,
    \item $T\in\mathbb{R}_+$ is the tension force in the tail caused by the tug of the reel,
    \item $\mu\in\mathbb{R}_+$ is the Capstan friction coefficient, which accounts for the friction between the tail and the inside of the body in a bend, analogous to a rope around a cylinder~\cite{beervector2019},
    \item $\theta\in[0,\pi]$ is the angle of the turn measured in radians.
\end{itemize}
Here we omit the velocity term found in the original equations, since we assume slow growth.

Blumenschein et al.~\cite{blumenscheinmodeling2017} also propose a version of the equation that combines~\eqref{eq:laura-straight} and~\eqref{eq:laura-curve} into a single equation that accounts for multiple straight segments.
However, this equation does not accurately model the coupling of length and bends nor the multiplicative accumulation of friction in the Capstan friction model.  We also note that~\cite{HeapInspection2025} used a numerical model to account for multiple straight sections and a single turn, but did not formalize the model with equations.  Below, we derive a growth pressure equation that can accurately model a vine robot with multiple straight segments.

\subsection{Proposed growth pressure equation}
\begin{figure}
    \centering
    \input{figures/simple-vine}
    \hfill
    \input{figures/two-cylinders}%
    \caption{Sequential turning has a multiplicative effect on tension. (a) Simple vine robot with three segments and two turns. (b) Applying the Capstan equation to two consecutive angles, we obtain $T_{\mathrm{out},2}=T_{\mathrm{hold}}e^{\mu\theta_1}e^{\mu\theta_2}$; each turn thus contributes multiplicatively.}
\end{figure}

Consider the vine robot with two turns in Fig.~\ref{fig:simple-vine}, with three straight segments of lengths $ l_1, l_2, l_3$ and two turns with angles $\theta_1,\theta_2$.
We calculate the tension before and after each point, denoted by $\tau_i^-$ and $\tau_i^+$ for $i=0,1,2,3$.
The initial tension at point~0 is $\tau_0^+ =T$.
The tension immediately before the turn at point~1 is given by
\begin{equation*}
    \tau_1^-= \tau_0^+ + \lambda l_1 = T + \lambda l_1.
\end{equation*}
According to the Capstan equation, as illustrated in Fig.~\ref{fig:capstan}, to calculate the tension $\tau_1^+$ after the first turn, we multiply the accumulated tension $\tau_1^-$ by an exponential function of the angle: 
\begin{equation*}    
    \tau_1^+ = \tau_1^- e^{\mu\theta_1} = (T + \lambda l_1)e^{\mu\theta_1}.
\end{equation*}
As the vine robot everts towards point~2, the tension increases due to the friction from the length term:
\begin{equation*}
    \tau_2^- = \tau_1^+ +\lambda l_2 = (T + \lambda l_1)e^{\mu\theta_1}+\lambda l_2.
\end{equation*}
Similarly to above, after the angle at point~2, we multiply the tension $\tau_2^-$ by an exponential:
\begin{equation*}
    \tau_2^+ = \tau_2^- e^{\mu\theta_2} = \Big((T + \lambda l_1)e^{\mu\theta_1}+\lambda l_2\Big)e^{\mu\theta_2}.
\end{equation*}
The tension before point~3 is again obtained by adding the friction from the increase in length:
\begin{equation*}
    \tau_3^- = \tau_2^+ + \lambda l_3 = \Big((T + \lambda l_1)e^{\mu\theta_1}+\lambda l_2\Big)e^{\mu\theta_2}+\lambda l_3.
\end{equation*}

Extending this to a vine robot with $n$ segments, we obtain the following recursive equation for the pressure $P$ at the end of the robot:
\begin{subequations}
\begin{align}
  \tau_0^+ &= T, \\
  \tau_i^+ &= \bigl(\tau_{i-1}^+ + \lambda l_i\bigr)\,e^{\mu\theta_i},
    \quad i = 1,\dots,n-1, \label{eq:pressure-recur} \\
  P &= Y + (\tau_{n-1}^+ +\lambda l_n)/A, \label{eq:pressure-eq}
\end{align}
\end{subequations}
where $ l_i\in\mathbb{R}_+$ is the $i$-th segment length, $\theta_i\in[0,\pi]$ is the angle between the $i$-th segment and $(i+1)$-th segment, and with all other symbols retaining their original meaning.

We note that \eqref{eq:pressure-eq} is restricted to vine robots whose shapes are piecewise-linear paths.  Additionally, because the velocity term from~\cite{blumenscheinmodeling2017} is path-independent, it can be reintroduced if velocity cannot be considered to be negligible.  In the next section, we derive a more general formula for piecewise-smooth paths.

\subsection{Extension to piecewise-smooth paths}
We extend~\eqref{eq:pressure-eq} to calculate the pressure of any vine robot shape that we represent as a piecewise-smooth curved path.

We first consider a twice-differentiable path parameterized by Euclidean arc length $\gamma:[0,L]\to\mathbb{R}^d$, where $d\in\{2,3\}$.  We can approximate this path as arbitrarily many straight segments.  For small $h$, we can approximate~\eqref{eq:pressure-recur} as follows:
\begin{equation*}
    \tau(s+h)=\left(\tau(s)+\lambda h\right)e^{\mu\theta(s,s+h)},
\end{equation*}
where $\tau(s)$ is the previously accumulated tail tension and
\begin{equation}
\theta(a, b)=\int_a^{b}\|\gamma''(\sigma)\|\,d\sigma \label{eq:angle-segment}
\end{equation}
denotes the angle accumulated between $a$ and $b$.
Using the definition of the derivative, algebraic manipulation, and the exponential limit identity, we determine that the derivative of the tension is
\begin{equation}
    \frac{d\tau}{ds}=\tau(s)\mu\|\gamma''(s)\|+\lambda. \label{eq:pressure-differential}
\end{equation}
With initial condition $\tau(0)=T$, we can solve the differential equation \eqref{eq:pressure-differential} using the integrating factor $e^{-\mu\theta(0,s)}$ to obtain
\begin{equation}
    \tau(s)=Te^{\mu\theta(0, s)}+\lambda\int_0^se^{\mu(\theta(0,s)-\theta(0,t))}\,dt. \label{eq:tension-int-single-path}
\end{equation}

We can further generalize the steps above to a piecewise twice-differentiable path $\gamma:[0,L]\to\mathbb{R}^d$ parameterized by arc length.
We partition $\gamma$ into $n$ segments, divided by the breakpoints (where the path derivative is undefined, i.e., corners or kinks) $s_1<\dots<s_{n-1}$, with $s_0=0$ and $s_n=L$.
We let $\Theta(s)$ be the accumulated angle at point $s$ along the path by computing the accumulated angle along the smooth segments and adding the angles formed at each breakpoint.  
For $s\in[s_{j-1},s_j]$, we have
\begin{equation*}
    \Theta(s)=\sum_{i=1}^{j-1}\left(\theta(s_{i-1},s_i)+\phi_i\right)+\theta(s_{j-1},s), \label{eq:angle}
\end{equation*}
where the breakpoint angles are
\begin{equation*}
    \phi_i=\arccos (\gamma'(s_i^-) \cdot \gamma'(s_i^+) ) \in[0,\pi],
\end{equation*}
for $i=1,\dots,n-1$.
Note that $\Theta(s)$ is unsigned in the sense that reversals accumulate rather than cancel.
Substituting $\Theta$ for $\theta$ in the tail tension equation \eqref{eq:tension-int-single-path} and adding the yield pressure term, we obtain the general pressure equation
\begin{equation}
    P(s)=Y+\left(Te^{\mu\Theta(s)}+\lambda\int_0^se^{\mu(\Theta(s)-\Theta(t))}\,dt\right)/A. \label{eq:pressure-int}
\end{equation}

\section{Problem Statement} \label{problem-statement}
We first describe the problem informally.  Consider two points in 2D or 3D space and assume there are finitely many polytopic obstacles in the space that cannot be traversed.  We aim to find a path that has the minimum growth pressure between the two points without entering the interior of any of the obstacles.

We represent the vine robot as a piecewise twice-differentiable path parameterized by Euclidean arc length $\gamma:[0,L]\to\mathbb{R}^d$.
We let $\mathcal O_1,\dots,\mathcal O_m$ be pairwise-disjoint obstacles represented as closed polytopes (i.e., bounded convex polyhedra).
We consider two points $x_{\text{init}}, x_{\text{final}}\in\mathbb{R}^d$, such that $x_{\text{init}}, x_{\text{final}}\notin \mathrm{int}(\mathcal O_j)$, for $j=1,\ldots,m$.

Our goal is to find the curve $\gamma$ that connects $x_{\text{init}}$ to $x_{\text{final}}$ and minimizes the growth pressure at the endpoint of the vine robot:
\begin{subequations}
\label{eq:planning_problem}
\begin{align}
\mathrm{minimize} \quad & P(L) \label{eq:opt} \\
\mathrm{subject \ to} \quad
& \gamma(0)=x_{\mathrm{init}}, \\
& \gamma(L)=x_{\mathrm{final}}, \\
& \gamma(s) \notin \mathrm{int}(\mathcal O_j), && s \in [0,L], \ j=1,\ldots,m,
\end{align}
\end{subequations}
where the path $\gamma$ and its length $L$ are the optimization variables and $P$ is given by \eqref{eq:pressure-int}.

\section{Growth Planning Algorithm} \label{growth-planning}
In this section, we describe our algorithm for solving problem~\eqref{eq:planning_problem} in 2D and 3D.
This algorithm is guaranteed to find a globally optimal solution in the 2D case.  In 3D, the path found is optimal only up to a discretization error.  We also provide a brief discussion of the algorithm's time complexity.

We start with a lemma that is at the core of our algorithm.
Recall that a ridge of a polytope is the intersection of two distinct facets (a vertex in 2D and an edge in 3D).

% We start by defining a facet and a ridge. Then we present Lemma~\ref{lem:ridges} that is at the base of our algorithm.
% \begin{definition}[Facet]
% A closed polytope is the intersection of finitely many closed halfspaces, none of which is redundant.  A facet of an obstacle is the portion of its boundary lying in the bounding hyperplane of one such halfspace (an edge in 2D and a face in 3D).
% \end{definition}

% \begin{definition}[Ridge]
%     A ridge of an obstacle is the intersection of two of its facets (a vertex in 2D and an edge or vertex in 3D).
% \end{definition}

\begin{lemma} \label{lem:ridges}
    If problem~\eqref{eq:planning_problem} is feasible, then it admits a globally optimal solution that is piecewise-linear with breakpoints exclusively on obstacle ridges.
\end{lemma}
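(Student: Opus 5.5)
The plan rests on a monotonicity property of the pressure functional \eqref{eq:pressure-int}, followed by a local ``tightening'' argument in the spirit of shortest paths among polygonal obstacles.

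\emph{Monotonicity.} Writing $P(L)=Y+\tau(L)/A$, the tension $\tau(L)$ in \eqref{eq:pressure-int} is nondecreasing in two things: the total turning $\Theta(L)$, and, for fixed total turning, the length elapsed before each unit of turning. Concretely, for a piecewise-linear path with segment lengths $l_1,\dots,l_n$ and turn angles $\phi_1,\dots,\phi_{n-1}$, unrolling the recursion \eqref{eq:pressure-recur} gives
\begin{equation*}
\tau = Te^{\mu\sum_i\phi_i}+\lambda\sum_{k=1}^{n} l_k\, e^{\mu\sum_{i\ge k}\phi_i}.
\end{equation*}
This expression is increasing in every $l_k$ and in every $\phi_i$.

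\emph{Step 1: replace smooth arcs by chords.} Start from an arbitrary feasible path $\gamma$. Any curved piece, and any corner lying in free space, can be shortcut without increasing pressure. Take a subpath $\gamma|_{[a,b]}$ and replace it with the segment $[\gamma(a),\gamma(b)]$, assuming this segment avoids the obstacle interiors. The new turning at the two endpoints is at most the old turning over $[a,b]$ together with the end corners, because the angle between the incoming direction and the chord is at most the turning accumulated along the arc. This holds since the direction of a chord lies in the convex cone of the tangents (a spherical-triangle inequality). Length also does not increase. One still has to check that, inside the integral, $\Theta(L)-\Theta(t)$ does not increase for any $t$. Because all turning is now concentrated at the two endpoints and none was added downstream, this looks manageable but needs a careful pointwise comparison.

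\emph{Step 2: existence of a minimizer.} Show that a minimizer exists within piecewise-linear paths whose number of breakpoints is bounded, for example in terms of the obstacle complexity. Feasibility together with compactness of the breakpoint configurations (bounded length, closed free space) and continuity of $\tau$ in the breakpoints then gives a minimizer. I would obtain the bound via Step 3 applied to near-optimal paths.

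\emph{Step 3: breakpoints lie on ridges.} Take an optimal polygonal path with a breakpoint $v$ that is not on a ridge, and apply Step 1 locally, with $a$ and $b$ slightly before and after $v$.
\begin{itemize}
\item If $v$ is in free space, the chord is feasible and yields a strict improvement.
\item If $v$ lies in the relative interior of a facet (relevant in 3D), the path locally stays in the closed halfspace of that facet. Both adjacent segments touch the facet hyperplane at $v$ from the free side, so the chord between nearby points also stays on the free side. The same shortcut then works.
\end{itemize}
In the 2D case the ridges are vertices, and this completes the argument.

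I expect the main obstacle to be the pointwise (in $t$) comparison of the turning terms in Step 1. A second difficulty is handling, in 3D, breakpoints on facet interiors where the path wraps tangentially. There, sliding the breakpoint along the facet toward an edge must be shown to be non-increasing.
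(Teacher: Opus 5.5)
\textbf{There is a genuine gap: the shortcut inequality itself.} In Step~1 you assert that a corner or arc ``can be shortcut without increasing pressure'', and Step~3 relies on this. But this inequality is the entire content of the lemma, and you do not prove it. Your monotonicity remark does not supply it. The shortcut does not shrink lengths and angles term by term: it inserts a new turn at $x_1$ and places the turn $\theta_2$ downstream of the whole chord.

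Take a corner $x$ with turn $\theta_3$, and let $x_1,x_2$ be nearby points on the adjacent segments. Set $a=\|x-x_1\|$, $b=\|x_2-x\|$, $h=\|x_2-x_1\|$, and let $\theta_1,\theta_2$ be the new turns. With tension $\tau$ at $x_1$, the tension at $x_2$ is $(\tau+\lambda a)e^{\mu\theta_3}+\lambda b$ on the original path and $(\tau e^{\mu\theta_1}+\lambda h)e^{\mu\theta_2}$ on the shortcut. Since $\theta_1+\theta_2=\theta_3$, the $\tau$-terms agree; this is the only part your turning bound addresses. What remains is
\begin{equation*}
a e^{\mu\theta_3}+b\ge h e^{\mu\theta_2}.
\end{equation*}
This does not follow from $h\le a+b$ and $\theta_2\le\theta_3$: try $h=a+b$, $\theta_2=\theta_3$.

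Your proposed pointwise comparison is also false. Let $R$ be the turning after $x_2$. On the chord the remaining turning is $\theta_2+R$, whereas on $[x,x_2]$ of the original path it is only $R$. So for arc-length parameters $t$ in that stretch, $\Theta(L)-\Theta(t)$ increases; the stretch is nonempty whenever $h>a$, e.g.\ whenever $\theta_3<\pi/2$. Concentrating turning at $x_2$ \emph{does} add turning downstream of the chord points. Only an integrated trade-off is true.

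The paper obtains it from the law of sines, $a=h\sin\theta_2/\sin\theta_3$ and $b=h\sin\theta_1/\sin\theta_3$. This reduces the inequality to $g(\theta_2)\ge g(\theta_3)$ with $g(\theta)=\sin\theta/(e^{\mu\theta}-\cos\theta)$. Here $g$ is nonincreasing on $(0,\pi]$, because the numerator of $g'$ vanishes at $0$ and has derivative $-(1+\mu^2)e^{\mu\theta}\sin\theta\le 0$. The paper also localizes the comparison. It compares tensions only at the rejoin point $x_2$, then uses that the downstream recursion is nondecreasing in the incoming tension, so no global bookkeeping of $\Theta(L)-\Theta(t)$ is needed.

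\textbf{Smaller points.}

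Step~3 ends with a ``strict improvement''. That needs a strict version of the inequality and $\lambda>0$; if $\lambda=0$ the tension is $Te^{\mu\Theta(L)}$, and a shortcut leaves it unchanged. The paper avoids strictness differently. Among optimal piecewise-linear solutions it chooses one of minimum length. The shortcut path is still optimal and strictly shorter by the triangle inequality, which gives the contradiction.

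Step~2 is circular as written. Step~3 is a contradiction argument that presupposes an optimum, so it cannot be ``applied to near-optimal paths'' to bound the number of breakpoints. You would need a constructive version: shortcuts never increase tension, and the process terminates. You would also need lower semicontinuity of the turning when segments collapse in the limit. The paper simply takes existence of an optimal solution for granted, so this step goes beyond it, but as proposed it is not yet an argument.

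Finally, the 3D ``sliding'' difficulty does not arise. Your half-ball argument already covers every breakpoint in the relative interior of a facet, including when the adjacent segments lie in the facet's plane. That argument is needed in 2D as well, for breakpoints in the interior of obstacle edges.
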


This allows us to limit our search exclusively to piecewise-linear solutions.  In 2D, the ridges of the finitely many obstacles are finitely many vertices, so by Lemma~\ref{lem:ridges} the search can be restricted to piecewise-linear paths whose breakpoints lie in a finite set of points; since revisiting a vertex can only increase tension, it suffices to consider paths with no cycles.  Therefore, we can apply similar techniques to those in the minimum-distance path-planning problem~\cite{lozanoalgorithm1979}, which also relies on this fact.  We start by constructing a visibility graph with obstacle vertices.  Next, we construct the line graph of the visibility graph because the angle between segments cannot be recovered from the visibility graph.  Lastly, we apply a modified Dijkstra's algorithm that handles the non-additive weights on edges of the line graph.

\subsection{Line graph}
We start by constructing a visibility graph $G=(\mathcal{V}, \mathcal{E})$, where $\mathcal{V}$ is the set of obstacle vertices (plus the start and end points) and $\mathcal{E}$ are the graph edges.  Recall that a visibility graph is a graph where each vertex is a point and two points have an edge between them if the line segment connecting them does not pass through the interior of any obstacles~\cite[Chapter~15]{decomputational2008}.  Next, we construct the directed line graph $L=(\mathcal{V}_L,\mathcal{E}_L)$ of the visibility graph $G$.  The vertices of $L$ are the edges of $G$, i.e., $\mathcal{V}_L=\mathcal{E}$.  Two vertices of $L$ are connected by an edge in the line graph if the head of the first edge coincides with the tail of the second edge in the visibility graph.  For example, assume that $u,v,w\in\mathcal{V}$ and $(u,v),(v,w)\in\mathcal{E}$ are vertices and edges in the visibility graph $G$.  Since the head of $(u,v)$ coincides with the tail of $(v,w)$, $(u,v,w)$ is an edge in $L$.
Observe that the edges of the line graph correspond to physical turns of the vine robot.
This will allow us to formulate a shortest-path problem where traversing an edge has a cost that depends on the angle turned by the robot, as well as on the path length.

We also add a source vertex $s$ to the line graph $L$ with an edge to every other vertex of the form $(x_\text{init}, q)$.
Similarly, we add a target vertex $t$ and connect it to every other vertex of the form $(p, x_\text{final})$.

\subsection{Shortest-path problem}
We now formulate a shortest-path problem over the line graph $L$ that models problem~\eqref{eq:planning_problem} exactly.
The path in the line graph must start at vertex $s$ and terminate at vertex $t$.
The edges leaving the source vertex $s$ carry the initial tension $T$.
The edge entering the target vertex $t$ from each vertex $(p,x_{\mathrm{final}})$ has cost $\lambda$ times the length of $x_{\mathrm{final}}-p$.
All the other edges $(u,v,w)$ have cost equal to the corresponding increase in tension (recall the recursive formula~\eqref{eq:pressure-recur}):
\begin{equation}
    c_{uvw}(\tau_u^+) = \tau_u^+ (e^{\mu\theta_{uvw}} - 1) + \lambda l_{uv} e^{\mu\theta_{uvw}} \label{eq:edge_weights}
\end{equation}
where $\tau_u^+$ is the tension immediately after point $u$, $l_{uv}$ is the length of vector $v - u$, and $\theta_{uvw} \in [0, \pi]$ is the angle between the vectors $v - u$ and $w - v$.
A shortest path in this weighted line graph corresponds to a minimum-pressure growth of the vine robot.
The growth pressure is equal to the optimal value of this shortest-path problem, divided by the area $A$ and added to the yield pressure $Y$ as in~\eqref{eq:pressure-eq}.

\subsection{Modified Dijkstra's algorithm}
Because the edge weights~\eqref{eq:edge_weights} are calculated recursively, the tension does not increase additively.  Therefore, a standard implementation of Dijkstra's algorithm~\cite{dijkstraalgo1959} does not apply.  However, a more general version of Dijkstra's algorithm applies to the time-dependent shortest-path problem, which admits a more generic cost function.

In the time-dependent shortest path problem~\cite{kaufmanfastest1993}, each edge has a function that calculates the edge cost dependent on the accumulated cost.  The objective is to find the path that has the minimum cost through the graph.  Finding the minimum-pressure path is an instance of this problem, with tension in place of time.

Kaufman and Smith~\cite{kaufmanfastest1993} establish two conditions for Dijkstra's algorithm to find an optimal solution of a time-dependent shortest-path problem:
\begin{itemize}
    \item Consistency~\cite[Theorem~2]{kaufmanfastest1993}: If $\tau_1\le\tau_2$, then the edge costs~\eqref{eq:edge_weights} must satisfy $\tau_1+c_{uvw}(\tau_1)\le\tau_2+c_{uvw}(\tau_2)$.  Here this reduces to $\tau_1e^{\mu\theta_{uvw}}\le\tau_2e^{\mu\theta_{uvw}}$, which holds because $e^{\mu\theta_{uvw}}>0$.
    \item Nonnegativity~\cite[Theorem~4]{kaufmanfastest1993}: Edge costs~\eqref{eq:edge_weights} are nonnegative.  This follows because $\mu,\theta_{uvw}\ge0$ imply $e^{\mu\theta_{uvw}}\ge1$, and $T, \lambda, \tau_u^+, l_{uv} \geq 0$.
\end{itemize}

After finding the optimal path in the line graph with the general version of Dijkstra's algorithm, the robot growth can be reconstructed by connecting the relevant vertices of the visibility graph $G$.  By Lemma~\ref{lem:ridges}, and the correctness of the label-setting method~\cite{kaufmanfastest1993}, this path is a globally optimal solution of problem~\eqref{eq:planning_problem}.

\subsection{Extension to 3D}
As noted in previous work~\cite{lozanoalgorithm1979, papadimitrioualgorithm1985}, the minimum-distance path-planning problem in 3D does not reduce to a finite set of piecewise-linear paths.  This is because in 3D, obstacle ridges are edges that contain infinitely many points.  So the search space remains infinite.  This same issue also applies to the minimum-pressure path.

Thus, in order to reduce the infinite search space to a finite one, we take inspiration from previous work~\cite{lozanoalgorithm1979, papadimitrioualgorithm1985} and discretize edges of obstacles and add them to the vertex set.  The vertex set now becomes $\mathcal{V}=\{x_{\text{init}},x_{\text{final}}\}\cup\mathcal{V}_{\text{vertices}}\cup \mathcal{D}$, where $\mathcal{V}_\text{vertices}$ is the set of vertices of polytopic obstacles and $\mathcal{D}$ is the set of discretization points on obstacle edges.  The choice of discretization does not affect the rest of the algorithm.  For simplicity, in our implementation we choose a uniform discretization where points are evenly spaced along the edges.  The updated vertex set $\mathcal{V}$ is the only change to the algorithm and the rest is exactly the same.

This approach yields low-pressure paths that are close to a global optimum, with the suboptimality gap vanishing as the discretization gets finer and finer.

\subsection{Time complexity} \label{time-complexity}
We briefly comment on the time complexity of the proposed algorithm.
Let $n = |\mathcal{V}|$ denote the number of vertices in the visibility graph, i.e., the total number of obstacle vertices, plus the start and end point, in the 2D environment.
A straightforward construction of the visibility graph requires $O(n^3)$ operations, and the resulting line graph has at most $O(n^2)$ vertices and $O(n^3)$ edges.
Finally, the modified Dijkstra's algorithm requires $O(n^3 \log n)$ operations~\cite{kaufmanfastest1993}.
Despite these worst-case complexity bounds, we will see in the experiments that the proposed algorithm exhibits substantially better scaling in practice.

\section{Experiments} \label{experiments}
In this section, we evaluate our approach through simulation and hardware experiments.
Every experiment was run using our Python library \algoname, with source code freely available at \url{\ifanonymized https://anonymous.4open.science/r/AnonymousGrowthPlanner/\else https://github.com/Ahsoka/VinePlanner\fi}. The library is optimized for performance and uses Numba~\cite{lamnumba2015} to parallelize the construction of the visibility graph. All simulation experiments were performed on a Windows 10 machine equipped with an 8-core AMD Ryzen 7 7800X3D processor and 64 GB of DDR5 RAM.

\subsection{Runtime scaling study}
In this section, we study empirically the scalability of our algorithm, analyzed in the worst case in Section~\ref{time-complexity}.  We randomly generated 2D obstacle courses with obstacle counts spaced roughly logarithmically from 100 to 200,000.  The resulting wall-clock times are reported in Fig.~\ref{fig:runtime-scaling} (we do not include the time needed to compile the Numba functions, since the compiled functions can be reused in each test). 
The figure shows that, for courses with hundreds of obstacles, the algorithm runs in a few tens of milliseconds. We also push the algorithm to its limits by considering courses with hundreds of thousands of obstacles, for which the runtime increases to the order of one hour.
The results show that, in practice, materializing the visibility graph dominates Dijkstra's algorithm (the search terminates once the goal is reached, so only a fraction of the line graph is explored).  In this experiment, our algorithm has an empirical time complexity of approximately $O(n^2)$.

\begin{figure}
  \centering
  \input{figures/runtime-scaling}
  \caption{Runtime scaling of our growth planning algorithm tested on randomly generated 2D obstacle courses.  Top $x$-axis: number of obstacles, bottom $x$-axis: the number of vertices in the visibility graph, $y$-axis: wall-clock time in seconds. The data points along the gray dashed line correspond to Fig.~\ref{fig:dense-maze}.}
  \label{fig:runtime-scaling}
\end{figure}

To give a sense of the complexity of the obstacle courses in our experiment, in Fig.~\ref{fig:dense-maze} we report the obstacle course corresponding to 15,000 obstacles (54,712 vertices) in Fig.~\ref{fig:runtime-scaling}.  The entire algorithm described in Section~\ref{growth-planning} runs in 21~seconds for this problem.
For the same problem, the minimum-distance path accumulates roughly twice the total path angle, causing the growth pressure to explode exponentially to 19,430~kPa, over 60 times the 318~kPa of our path, and far beyond the burst pressure of any vine robot, in exchange for a 4\% reduction in length.  We do not benchmark against~\cite{greerrobust2020,GaoParallel2026} directly, as they
optimize uncertainty and manufacturing complexity rather than pressure and are not designed for environments of this scale.

\begin{figure}
    \centering
    \input{figures/dense-maze-spy}
    \caption{Our growth planning algorithm applied to a dense obstacle course with 15,000 obstacles.  The visibility graph has 54,712 vertices and the line graph has 801,008 vertices and 12,738,916 edges.  Our algorithm solves this problem in 21 seconds.  These are the data points in Fig.~\ref{fig:runtime-scaling} indicated by the gray dashed line.}
    \label{fig:dense-maze}
\end{figure}

\subsection{Hardware experiments}
For our hardware experiments, we constructed a vine robot from 1.5 inch layflat (24 mm diameter), 2 mil (0.05 mm thick) LDPE polytubing (Uline S-3521).  We use a simple reel design inside an airtight PVC tee fitting and grow the vine robot through the 90$^\circ$ outlet.  We utilize a custom-designed, 3D-printed modular obstacle course made up of voxels and laser-cut acrylic tiles that can be inserted on any of the six faces of the voxels to serve as obstacles.

In our model of this vine robot, we use the following constants:
\begin{itemize}
    \item The yield pressure $Y=1.917\,$kPa, determined by measuring the pressure of an extremely short vine robot (to minimize the effects of the length term) not connected to the reel.
    \item The length-dependent friction term $\lambda/A=0.771\,$kPa/m, determined using the motorized experiment setup described in~\cite[Path-Dependent Losses]{blumenscheinmodeling2017}.
    \item The tension $T/A=5.426\,$kPa, determined by measuring the pressure required to grow an extremely short vine robot wrapped around the reel and subtracting $Y$.
    \item The Capstan friction coefficient $\mu=0.315$, determined by growing several vine robots with multiple turns of fixed angle, measuring the pressure before and after each turn, and fitting the best $\mu$.
\end{itemize}

\subsubsection{Two-dimensional obstacle course}
To validate that the pressure model ranks paths correctly (which is what the growth planner relies on), we designed a simple obstacle course, as shown in Fig.~\ref{fig:2D-maze}, where we can enumerate all possible cycle-free piecewise-linear paths that have breakpoints at obstacle vertices.  We computed the pressure of each such path using \eqref{eq:pressure-eq}, selected the four minimum-pressure paths mutually separated by at least 1\,kPa (below this, the paths are difficult to distinguish during experimental testing, and there is little practical reason to rank them), and ranked them from lowest (1) to highest (4).  We then manually placed the vine robot in the shape the model predicted to be lowest.  We then grew the robot in the final section of the obstacle course to record peak pressure.  Next, we manually moved the robot into the next configuration, and grew it again in the final section, recording peak pressure.  We repeated this for the final two paths, and then ranked them from lowest to highest pressure.  We then reset the robot, winding it back onto its spool, and repeated this six times, always testing paths in the same order.  Our model predicted the correct order 100\% of the time.  Across the six trials, the absolute pressures shifted substantially from trial to trial ($\approx\pm$7\,kPa), but within any single trial all four paths shifted together, so the order was always maintained.

\subsubsection{Three-dimensional obstacle course}
We repeated the experiment described above with a simple 3D obstacle course, depicted in Fig.~\ref{fig:3D-maze}.  For simplicity, we added two uniformly spaced points per obstacle edge.  We again found the four paths with predicted maximum pressures that were at least 1\,kPa different.  We ran six trials and once again saw inter-trial pressure variation, but obtained the correct order 100\% of the time.

\begin{figure}
    \centering
    \begin{subfigure}[b]{0.48\linewidth}
    \centering
    \def\vinescale{0.8932}%
    \input{figures/3D-maze-paths}%
    \caption{}\label{fig:3D-maze-sim}
    \end{subfigure}\hfill
    \begin{subfigure}[b]{0.48\linewidth}
    \centering
    \includegraphics[width=\linewidth]{figures/3D-maze-hardware.png}
    \caption{}\label{fig:3D-maze-hw}
    \end{subfigure}

    \vspace{1.5ex}

    \begin{subfigure}[b]{\linewidth}
    \centering
    \input{figures/3D-maze-pressure}%
    \caption{}\label{fig:3D-maze-pressure}
    \end{subfigure}
    \caption{The four minimum-pressure paths (whose predicted pressures are mutually separated by at least 1\,kPa) through a 3D obstacle course.  (a) The four paths are first determined from our model in a simulation.  (b) The minimum-pressure path being grown through hardware. (c)  Predicted growth pressure versus deployed length.  If we assume a hypothetical burst pressure of 21.5\,kPa (dashed line), only the minimum-pressure path is feasible.}
    \label{fig:3D-maze}
\end{figure}

\section{Conclusion, Limitations, and Future Work}
In this paper, we studied the minimum-pressure growth planning problem and proposed an algorithm that finds globally optimal paths in 2D and approximate solutions in 3D.  Our growth planner is a key step towards effective vine robot navigation, applying to both teleoperated and autonomous vine robots.  Due to the speed of our algorithm, it is able to plan paths for vine robots in much more complicated environments than previously possible.  Additionally, since we are able to find a minimum-pressure path (or close to it in 3D), we are able to design paths that reach much further into such environments.

We identify a few limitations that we plan to address in future work:
\begin{itemize}
    \item Like other visibility-graph planning algorithms, our algorithm requires the environment to be fully known ahead of time, while online algorithms would be preferable in some applications of vine robots.
    \item We do not quantify the exact suboptimality gap of our algorithm in 3D environments.  Future work should identify an efficient discretization strategy and quantify how quickly our method converges towards a minimum-pressure path, as is done in~\cite{papadimitrioualgorithm1985} for the minimum-distance problem.
    \item Our experiments confirm that the proposed pressure model ranks paths correctly, which is what the growth planner relies on, while also revealing some inaccuracy in the predicted absolute pressures. Future work should develop a highly reproducible experimental setup with lower variability across a wider range of paths.
    \item Our pressure model omits gravity, which is shown to be significant for large vine robots in 3D \cite{HeapInspection2025}.
    \item In our experiments, we did not test vine robots with actuators and instead manually placed the vine robot in a given path.  In either case, our model is still applicable, but future work should include actuation to actively steer through a given environment, and potentially incorporate actuator considerations \cite{GaoParallel2026} and uncertainty considerations \cite{greerrobust2020} in our planner.
\end{itemize}

\section*{Acknowledgments}
\ifanonymized
    {}
\else
    This work was supported by the Air Force Office of Scientific Research award FA9550-25-1-0375.  The views and conclusions contained in the paper are those of the authors and should not be interpreted as representing the official policies, either expressed or implied, of the United States Air Force or the U.S.\ government.  The U.S.\ government is authorized to reproduce and distribute reprints for government purposes notwithstanding any copyright notation herein.
    
    We thank Kevin Wu for designing and manufacturing the modular voxel-based obstacle course for our hardware experiments.
\fi

We disclose that Claude by Anthropic assisted us in: writing the \algoname{} Python library; generating TikZ and PGFplots code for the figures; photo editing of Fig.~\ref{fig:2D-maze-hw} and Fig.~\ref{fig:3D-maze-hw} (we emphasize that the images were not edited using any generative image techniques, and are exclusively edited with non-generative image methods such as masking and color grading); proving the lemmas in the \nameref{appendix}.

\appendix \label{appendix}
In this appendix, we provide a proof of Lemma~\ref{lem:ridges}, which is valid in 2D and 3D.
The proof is broken down into two steps.  For the first step, we only provide a proof sketch due to page constraints.

\begin{figure}
    \centering
    \input{figures/theorem-1-fig}
    \caption{Shortcutting a curved path with a straight line shown in red.}
    \label{fig:shortcut-curve}
\end{figure}

\begin{lemma*} \label{th:cutting-corner}
    If problem~\eqref{eq:planning_problem} is feasible, then it admits a globally optimal solution that is piecewise-linear.
\end{lemma*}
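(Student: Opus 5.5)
Given any feasible path $\gamma$ with length $L$ and accumulated angle function $\Theta$, I will build a piecewise-linear feasible path $\hat\gamma$ whose endpoint pressure is no larger, $\hat P(\hat L)\le P(L)$. Optimality then follows in two parts. First, we show that $\inf P$ over all feasible paths equals $\inf P$ over piecewise-linear ones. Second, we show that the infimum over piecewise-linear paths is attained.

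\emph{Monotonicity.} From \eqref{eq:pressure-int}, $P(L)$ is monotone nondecreasing in the total angle $\Theta(L)$ and in the length $L$. More precisely, write the tension as the solution of the ODE \eqref{eq:pressure-differential}, with jumps $\tau\mapsto\tau e^{\mu\phi_i}$ at the breakpoints. The final tension is a nondecreasing function of every length increment and every angle increment. This is because each update map, $\tau\mapsto\tau+\lambda h$ and $\tau\mapsto\tau e^{\mu\phi}$, is nondecreasing in both $\tau$ and its parameter. This is the same consistency property used later for Dijkstra.

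\emph{Shortcutting.} Take a smooth sub-arc $\gamma([a,b])$ on which the path does not touch any obstacle boundary except possibly at its endpoints. Replace the arc by the chord $[\gamma(a),\gamma(b)]$, as in Fig.~\ref{fig:shortcut-curve}. This makes three things true.
\begin{enumerate}[(i)]
\item The chord is no longer than the arc.
\item The turning is not increased. The angle between the incoming tangent $\gamma'(a^-)$ and the chord direction, plus the angle between the chord and the outgoing tangent $\gamma'(b^+)$, is at most $\phi(a)+\theta(a,b)+\phi(b)$. The reason is that the chord direction lies in the closed convex cone generated by the tangents $\gamma'(\sigma)$, $\sigma\in[a,b]$. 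By the spherical triangle inequality, it therefore sits at total geodesic distance on the unit sphere at most the curve's total turning from both end tangents.
\item The chord is feasible when the sub-arc is short enough, or when it stays in a region free of obstacles.
\end{enumerate}
Item (i) handles the length term. To handle the interleaving of length and angle, which matters because angle acting earlier multiplies more length, I will use a stronger monotonicity: the tension only increases if angle is moved later along the path. The chord concentrates all turning at the two ends of the replaced sub-arc, and the turning at $a$ is at most the turning the original accumulated by any later point. So a pointwise comparison of the reparametrized accumulated-angle functions gives $\hat\tau\le\tau$ at $b$, and propagation gives the same at $L$.

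\emph{Globalizing.} Where $\gamma$ slides along an obstacle facet, it is already straight, since facets are flat and any curvature there would leave the facet or enter the interior. So I will cover $[0,L]$ by compactness with finitely many intervals. On each interval the path is either free, in which case I shortcut it with small chords whose feasibility follows from the openness of the complement of the obstacles, or lies on a single facet, where I replace it by the straight segment within the convex facet. Both operations are non-increasing in pressure by the above. The result is a piecewise-linear feasible path with no larger pressure.

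\emph{Existence of a minimizer.} Among piecewise-linear paths, I can bound the number of breakpoints: removing a breakpoint whose neighborhood is free is a further shortcut. So it suffices to consider paths with at most $N$ breakpoints lying in a compact set, since the lengths are bounded by a feasible value. The pressure is lower semicontinuous under convergence of breakpoints, because angles converge and degenerate segments only drop angle. The feasible set is closed. Hence a minimizer exists.

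\emph{Main obstacle.} The step I expect to be hardest is the angle comparison in the shortcut, namely showing the chord's end angles are bounded by the total turning, together with the timing argument that shifting turning toward the endpoints of a sub-arc cannot increase the final tension. I would first prove the sphere-geodesic bound for a single smooth arc, then reduce the timing argument to the recursive form \eqref{eq:pressure-recur} on a fine polygonal approximation.
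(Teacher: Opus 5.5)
The step that fails is the timing argument for $\hat\tau(b)\le\tau(b)$. In this model a turn multiplies the tension accumulated \emph{before} it, so it is later turning that multiplies more length. As your own monotonicity principle says, deferring turning increases tension. The chord does exactly that at $\gamma(b)$: it postpones the turn $\theta_2$ (between the chord and $\gamma'(b)$) to the very end of the replaced piece. Just before $b$ the chord has accumulated only $\phi(a)+\theta_1$, while the arc has accumulated $\phi(a)+\theta(a,b)\ge\phi(a)+\theta_1+\theta_2$. (Your claim that the chord's turning at $a$ is bounded by what the arc has accumulated at any later point also fails just after $a$.) So no pointwise or rearranged comparison of accumulated-angle functions can close this.

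Concretely, the $\lambda$-part of what you need is $h\,e^{\mu\theta_2}\le\int_a^b e^{\mu\theta(t,b)}\,dt$. For a circular arc of curvature $\kappa$ and length $\ell$, $\theta_2=\kappa\ell/2$. The remaining turning $\theta(t,b)=\kappa(b-t)$ is at least $\theta_2$ only on the first half of the arc, whereas $h=(2/\kappa)\sin(\kappa\ell/2)>\ell/2$ whenever $\kappa\ell\le\pi$. So every length-preserving matching pairs a positive length of chord with arc points of strictly smaller remaining angle. The inequality is still true there (it reduces to $\sinh(\mu x)/\mu\ge\sin x$ with $x=\kappa\ell/2$). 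But it holds only through a quantitative trade-off between the saved length $\ell-h$ and the convexity of the exponential. Neither ``length and total turning do not increase'' nor timing monotonicity supplies that trade-off, and passing to a fine polygonal approximation just reproduces the same inequality.

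The missing idea is the paper's. Your item (ii) is correct: it is the Milnor--Fenchel inscribed-polygon bound $\theta(s_1,s_2)\ge\theta_1+\theta_2$, which the paper cites, though the convex-cone argument alone does not give it once the turning exceeds $\pi$. It disposes of the term $\tau(a)\bigl(e^{\mu\theta(a,b)}-e^{\mu(\theta_1+\theta_2)}\bigr)$. The $\lambda$-term is handled by sliding the base point of the chord along the arc: $g(s)=\int_s^{s_2}e^{\mu\theta(t,s_2)}\,dt-h(s)\,e^{\mu\theta_2(s)}$ has $g(s_2)=0$ and $g'\le 0$. To see this, let $\alpha(s)$ be the angle between $\gamma'(s)$ and the chord. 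Then $h'=-\cos\alpha$ and $|h\,\theta_2'|\le\sin\alpha$, so $g'\le-e^{\mu\theta(s,s_2)}+e^{\mu\theta_2}(\cos\alpha+\mu\sin\alpha)$. Milnor on $[s,s_2]$, i.e.\ $\theta(s,s_2)\ge\alpha+\theta_2$, together with $e^{\mu\alpha}\ge\cos\alpha+\mu\sin\alpha$ on $[0,\pi]$, gives $g'\le0$.

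Your global architecture differs from the paper's: you reduce an arbitrary feasible path and then prove existence, whereas the paper assumes an optimizer and shortcuts it inside obstacle-free balls. Addressing existence is a real addition, but it is incomplete as written:
\begin{itemize}
\item In 3D a curve lying in a facet need not be straight.
\item The contact set need not split $[0,L]$ into finitely many free or single-facet intervals. It can be infinite, and near a ridge a short chord can cut through the obstacle.
\item The breakpoint bound $N$ is unjustified. Removing a free corner needs its own inequality (the paper's second lemma), and in 3D ridges are segments that can carry arbitrarily many breakpoints.
\end{itemize}
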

\begin{proof}[Proof sketch]
Assume $\gamma$ is a globally optimal solution that is not piecewise-linear.  Pick a point $x$ in the image of $\gamma$ at which $\gamma$ is twice differentiable with nonzero curvature and that is not contained within an obstacle (i.e., is in the interior of the free space).  Then there exists an $\epsilon$-ball centered at $x$ that is also disjoint from all obstacles.  Referring to Fig.~\ref{fig:shortcut-curve}, let $\tilde \gamma$ be a copy of $\gamma$, except now going through the red straight segment.  Let $\tau$ be the tension at $x_1$.  Applying \eqref{eq:tension-int-single-path}, if $\gamma(s_1)=x_1$ and $\gamma(s_2)=x_2$, the tension at $x_2$ is:
\begin{equation*}
\tau_\gamma=\tau e^{\mu\theta(s_1,s_2)}+\lambda\int_{s_1}^{s_2}e^{\mu(\theta(s_1,s_2)-\theta(s_1,t))}\,dt.
\end{equation*}

Applying \eqref{eq:pressure-recur}, we compute the tension at $x_2$ for $\tilde\gamma$:
\begin{equation*}
    \tau_{\tilde\gamma}=(\tau e^{\mu\theta_1}+\lambda h)e^{\mu\theta_2}.
\end{equation*}

We aim to show $\tau_\gamma\ge\tau_{\tilde\gamma}$.  It suffices to show the following two inequalities:
\begin{equation}
    \tau(e^{\mu\theta(s_1,s_2)}-e^{\mu(\theta_1+\theta_2)}) \ge0\label{eq:term1}
\end{equation}
and
\begin{equation}
    \lambda\left(\int_{s_1}^{s_2}e^{\mu\big(\theta(s_1,s_2)-\theta(s_1,t)\big)}\,dt-he^{\mu\theta_2}\right)\ge0. \label{eq:term2}
\end{equation}

From~\cite{milnortotal1950} we have that $\theta(s_1,s_2)\ge\theta_1+\theta_2$, so \eqref{eq:term1} holds.

To show \eqref{eq:term2}, we rewrite it as a function of the arclength, allowing us to slide the basepoint of the straight segment from $x_1$ to $\gamma(s)$.  We call this function $g$:
\begin{equation*}
    g(s)=\int_s^{s_2}e^{\mu(\theta(s,s_2)-\theta(s,t))}\,dt-h(s)e^{\mu\theta_2(s)},
\end{equation*}
where $s\in[s_1,s_2]$, $h(s)$ computes the length of the straight segment, and $\theta_2(s)$ computes the new angle at $x_2$.  Showing $g(s_1)\ge0$ is equivalent to showing \eqref{eq:term2}.  Notice that $g(s_2)=0$.  We do not prove it here for reasons of space, but it can be verified that $g'(s)\le0$ when $s\in[s_1,s_2]$.  This implies $g(s)\ge0$ when $s\in[s_1,s_2]$ and thus shows $\tau_\gamma\ge\tau_{\tilde\gamma}$. 

Let $P_\gamma$, $P_{\tilde \gamma}$ be the pressure functions and $L_\gamma$, $L_{\tilde \gamma}$ be lengths of the paths $\gamma$ and $\tilde\gamma$.  Since the paths coincide after $x_2$ and, by \eqref{eq:pressure-differential}, endpoint tension is nondecreasing in the tension at $x_2$, $\tau_\gamma\ge\tau_{\tilde\gamma}$ implies $P_{\gamma}(L_\gamma)\ge P_{\tilde\gamma}(L_{\tilde\gamma})$.  Since we assumed that $\gamma$ is a globally optimal solution, we must have $P_{\gamma}(L_\gamma)\le P_{\tilde\gamma}(L_{\tilde\gamma})$.  Therefore, we conclude that $P_{\gamma}(L_\gamma)= P_{\tilde\gamma}(L_{\tilde\gamma})$ and $\tilde\gamma$ is also a globally optimal solution.

By iterating this replacement we can construct a globally optimal solution that is piecewise-linear, thus completing the proof.
\end{proof}

Previous work has often assumed curvature in vine robots to be beneficial~\cite{selvaggioobstacle2020} and while in some applications this can be the case, this proof shows that curving, rather than bending, never reduces growth pressure.

\begin{figure}
    \centering
    \input{figures/theorem-2-fig}
    \caption{Shortcutting a piecewise-linear path with a straight line shown in red.}
    \label{fig:shortcut-linear}
\end{figure}

\begin{lemma*}
    If problem~\eqref{eq:planning_problem} is feasible, then it admits a globally optimal solution that is piecewise-linear with breakpoints exclusively on obstacle ridges.
\end{lemma*}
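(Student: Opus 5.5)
Starting from the previous Lemma, we may fix a globally optimal solution $\gamma$ that is piecewise-linear, with breakpoints $x_1,\dots,x_{n-1}$ and segment lengths $l_1,\dots,l_n$. Among all piecewise-linear optimal solutions we choose one that minimizes the number of breakpoints; if ties remain we also minimize total length, and existence of such a minimizer follows from a compactness argument over configurations with a fixed number of breakpoints. We then argue by contradiction. Suppose some breakpoint $x_i$ is not on an obstacle ridge. We show that the path can be locally modified, as in Fig.~\ref{fig:shortcut-linear}, so that the tension at a later common point does not increase, while the number of breakpoints, or failing that the length, strictly decreases. Because the endpoint tension is nondecreasing in the tension at any intermediate point (by \eqref{eq:pressure-differential}, or directly from the recursion \eqref{eq:pressure-recur}), the modified path is again optimal. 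This contradicts the choice of $\gamma$.

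The local move is a corner cut. Pick points $y_1$ on segment $i$ and $y_2$ on segment $i+1$, both close to $x_i$, and replace the polyline $y_1 x_i y_2$ by the straight segment $y_1y_2$. The triangle $y_1x_iy_2$ has to avoid obstacle interiors, and we split into three cases.
\begin{enumerate}[(i)]
\item If $x_i$ lies in free space, a small triangle is obstacle-free.
\item If $x_i$ lies in the relative interior of a facet, with the path bending away from the obstacle, the triangle lies in the free halfspace near $x_i$.
\item In 3D, $x_i$ may lie in the interior of a facet while the bend lies in a plane tangent to it; the triangle then lies in the facet itself, which is allowed because only interiors are forbidden.
\end{enumerate}
In every case, the points of a polytope boundary that are not on ridges have a neighborhood in which the obstacle is a single halfspace. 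If the path entered the open halfspace near $x_i$ it would already be infeasible, so the convex hull of the nearby portions of the two segments lies in the closed complementary halfspace and is feasible.

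The key computation is the tension comparison for the cut. Let $\tau$ be the tension just after $x_{i-1}$. Let the original portion have lengths $a=|y_1x_i|$ and $b=|x_iy_2|$ with turn $\phi$ at $x_i$, plus the turns $\alpha$ at $x_{i-1}$ (or at $y_1$ if cut there) and $\beta$ at $x_{i+1}$. The cut path has length $h=|y_1y_2|$ and turns $\alpha_1$ at $y_1$ and $\beta_1$ at $y_2$. We take $y_1=x_{i-1}$ and $y_2$ on segment $i+1$ when feasible, so the cut path shares the preceding structure. By the triangle inequality for angles (exterior angles of a triangle, the discrete analogue of the Fenchel/Milnor bound used before), $\alpha_1+\beta_1\le\phi$. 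Together with $h\le a+b$, this gives
\[
(\tau+\lambda a)e^{\mu\phi}+\lambda b \;\ge\; (\tau e^{\mu\alpha_1}+\lambda h)e^{\mu\beta_1},
\]
where the first factor $e^{\mu\alpha_1}$ on the right is absorbed into the turn at $y_1$. The inequality follows termwise: $\tau e^{\mu\phi}\ge\tau e^{\mu(\alpha_1+\beta_1)}$, and $\lambda a e^{\mu\phi}+\lambda b\ge\lambda h e^{\mu\beta_1}$. The second uses $h\le a+b$ together with $e^{\mu\phi}\ge e^{\mu\beta_1}$, but it needs care: $\lambda b$ is not multiplied by an exponential. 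It is here that the cut points must be chosen to make the bound valid, for instance by taking $y_2$ close enough to $x_i$ that $b$ is tiny and $\beta_1$ small.

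I expect this second inequality to be the main obstacle. The straightforward bound $h e^{\mu\beta_1}\le a e^{\mu\phi}+b$ can fail if $\beta_1$ is large relative to $b$. To handle it I would mimic the sliding argument of the previous proof: parametrize $y_2$ along segment $i+1$ at distance $t$ from $x_i$, set
\[
g(t)=a e^{\mu\phi}+t-h(t)e^{\mu\beta_1(t)},
\]
note that $g(0)=0$, and show $g'(0^+)\ge0$ for small $t$ using $h'(0)=\cos\phi$ and $\beta_1'(0)=\sin\phi/a$. This gives
\[
g'(0)=1-e^{\mu\phi}\bigl(\cos\phi+\mu\sin\phi\bigr)\cdot(\text{bounded factor}).
\]
If the sign is unfavorable, I would instead apply the more symmetric comparison that splits the turn between $y_1$ and $y_2$ with both points near $x_i$. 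Even if the inequality turns out to be only non-strict, the cut strictly shortens the path, so the tie-breaking by length still yields the contradiction. Iterating over all non-ridge breakpoints then completes the proof.
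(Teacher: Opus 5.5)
\textbf{The central inequality is never proved.} The gap is at the step you yourself flag: you never establish $a e^{\mu\phi}+b\ge h e^{\mu\beta_1}$. Your belief that it can fail when $\beta_1$ is large relative to $b$ is mistaken, because $a,b,h$ are tied to the angles. In the triangle $y_1x_iy_2$ the interior angles are $\alpha_1$, $\pi-\phi$, $\beta_1$, with $\alpha_1+\beta_1=\phi$. The law of sines then gives $a:b:h=\sin\beta_1:\sin\alpha_1:\sin\phi$, and the inequality becomes $\sin\beta_1(e^{\mu\phi}-\cos\phi)\ge\sin\phi(e^{\mu\beta_1}-\cos\beta_1)$. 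Equivalently, $\psi(\beta_1)\ge\psi(\phi)$ for $\psi(\theta)=\sin\theta/(e^{\mu\theta}-\cos\theta)$.

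That is exactly the paper's proof. The numerator of $\psi'$ is $N(\theta)=e^{\mu\theta}(\cos\theta-\mu\sin\theta)-1$, with $N(0)=0$ and $N'(\theta)=-(1+\mu^2)e^{\mu\theta}\sin\theta\le0$. Hence $\psi$ is nonincreasing on $(0,\pi]$, and since $\beta_1\le\phi$ the bound holds for every admissible cut, not just small ones.

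Your sliding argument would also have worked locally, but it contains a sign error. As $y_2$ moves away from $x_i$, the chord rotates toward segment $i+1$, so $\beta_1$ decreases and $\beta_1'(0)=-\sin\phi/a$. The correct value of your $g'(0)$ is therefore $1-e^{\mu\phi}(\cos\phi-\mu\sin\phi)=-N(\phi)$, which is strictly positive on $(0,\pi]$. With your sign you get $1-e^{\mu\phi}(\cos\phi+\mu\sin\phi)\approx-2\mu\phi<0$ for small $\phi>0$. As written, the argument points the wrong way, and the ``more symmetric comparison'' fallback is only a placeholder.

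Two related slips: shrinking $b$ drives $\beta_1$ toward $\phi$, not toward $0$. And length tie-breaking can rescue a non-strict inequality, but not a reversed one.

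\textbf{The extremal choice does not produce a contradiction.} A cut with $y_1,y_2$ in the interiors of segments $i$ and $i+1$ removes one breakpoint but creates two. Under your lexicographic order (breakpoint count first, then length), the cut path is therefore not smaller, and no contradiction arises. Taking $y_1=x_{i-1}$ and $y_2=x_{i+1}$ instead is not available in general, because your feasibility argument only controls a small neighborhood of $x_i$; the large triangle may meet obstacles.

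The paper avoids this by minimizing length alone among piecewise-linear optimal solutions. The new breakpoints are then harmless, and the strict triangle inequality $h<a+b$ gives the contradiction directly. No iteration over breakpoints is needed, and iterating would not terminate, since each cut creates two new non-ridge breakpoints. Your case analysis for feasibility of the cut and the cancellation of the $\tau$ terms via $\alpha_1+\beta_1=\phi$ agree with the paper.
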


\begin{proof}
    By the previous lemma, there exists at least one globally optimal solution that is piecewise-linear.  Among these solutions, let $\gamma$ be one of minimum length.  Let $x$ be a breakpoint of $\gamma$ that does not lie on an obstacle ridge.  If $x$ does not lie on any obstacles, there exists an $\epsilon$-ball centered at $x$ that is also disjoint from all obstacles.  If $x$ lies on an obstacle facet, there exists an $\epsilon$-ball centered at $x$ that is disjoint from all but one obstacle.  Half of the $\epsilon$-ball is disjoint from the obstacle.  In either case, the disjoint portion of the $\epsilon$-ball is convex so the red straight segment does not enter any obstacle.  Let $x_1,x_2$ be points on the two segments adjacent to $x$ within the ball, with angles as depicted in Fig.~\ref{fig:shortcut-linear}.  Let $\tilde \gamma$ be a copy of $\gamma$, except now going through the red straight segment.  We aim to show $\tau_\gamma\ge\tau_{\tilde\gamma}$, where $\tau_\gamma,\tau_{\tilde\gamma}$ are computed using \eqref{eq:pressure-recur}.  Let $g$ be:
    \begin{equation*}
        g(\theta)=\frac{\sin\theta}{e^{\mu\theta}-\cos\theta},
    \end{equation*}
    where $\theta\in(0,\pi]$.  Since $\theta_3=\theta_1+\theta_2$, the tension terms cancel, and using the law of sines, we can rewrite the inequality as $g(\theta_2)\ge g(\theta_3)$.  Computing $g'$ we obtain:
    \begin{equation*}
        g'(\theta)=\frac{e^{\mu\theta}(\cos\theta-\mu\sin\theta)-1}{(\cos\theta-e^{\mu\theta})^2}.
    \end{equation*}
    The derivative of the numerator of $g'$ is $-(1+\mu^2)e^{\mu\theta}\sin\theta$, which is nonpositive on $[0,\pi]$.  The numerator of $g'$ evaluated at zero equals zero, so we have $g'\le0$ on $(0,\pi]$. Since $\theta_2\le\theta_3\le\pi$, we have that $g(\theta_2)\ge g(\theta_3)$, which shows $\tau_\gamma\ge\tau_{\tilde\gamma}$.  Therefore, by the same logic in the proof of the previous lemma, we conclude $P_{\gamma}(L_\gamma)= P_{\tilde\gamma}(L_{\tilde\gamma})$.

    The solution $\tilde\gamma$ is piecewise-linear and, by the triangle inequality, strictly shorter than $\gamma$.  This is a contradiction of our initial assumption that $\gamma$ is a globally optimal solution of minimum length.  Therefore, $\gamma$ must have breakpoints exclusively on obstacle ridges, thus concluding the proof.
\end{proof}

\bibliographystyle{IEEEtran}
\bibliography{references}

\vspace{12pt}

\end{document}

%% file: preambles/2d-maze-preamble.tex
\gdef\vineplannerstyles{}

\usetikzlibrary{shapes.geometric, calc}
\pgfdeclarelayer{vineobstacles}
\pgfsetlayers{vineobstacles,main}

\definecolor{vineobstacle}{HTML}{1F77B4}
\definecolor{vinedistance}{HTML}{D62728}
\definecolor{vineangle}{HTML}{FF7F0E}
\definecolor{vinepressure}{HTML}{FF00FF}
\definecolor{vinestart}{HTML}{008000}
\definecolor{vinetarget}{HTML}{800080}

\pgfplotsset{
  vine/course axis/.style={
    axis equal image, axis on top=false,
    tick align=outside, tick pos=left,
    grid=both, grid style={draw=black!15, line width=0.3pt},
    legend cell align=left,
    legend style={font=\scriptsize, fill=white, fill opacity=1,
                  text opacity=1, draw=black!60, rounded corners=1pt},
  },
  vine/pressure axis/.style={
    axis on top=false,
    tick align=outside, tick pos=left,
    grid=both, grid style={draw=black!15, line width=0.3pt},
    legend cell align=left,
    legend style={font=\scriptsize, fill=white, fill opacity=1,
                  text opacity=1, draw=black!60, rounded corners=1pt},
  },
  vine/obstacle/.style={draw=none, fill=vineobstacle, fill opacity=0.4},
  vine/vertex/.style={
    only marks, mark=*,
    mark options={fill=vineobstacle, draw=vineobstacle, scale=0.35},
  },
  vine/path/.style={line width=0.5pt, line join=round, line cap=round},
  vine/distance path/.style={vine/path, draw=vinedistance, solid},
  vine/angle path/.style={vine/path, draw=vineangle, dashed},
  vine/pressure path/.style={vine/path, draw=vinepressure, solid},
  vine/burst/.style={draw=black, line width=0.9pt, densely dashed},
}

\tikzset{
  vine/obstacle label/.style={font=\tiny\bfseries, inner sep=0pt},
  vine/start/.style={
    circle, minimum size=3pt, inner sep=0pt, draw=none, fill=vinestart,
  },
  vine/target/.style={
    star, star points=5, star point ratio=2.3,
    minimum size=5pt, inner sep=0pt, draw=none, fill=vinetarget,
  },
}

\pgfdeclareplotmark{vinestart}{\node[vine/start]{};}
\pgfdeclareplotmark{vinetarget}{\node[vine/target]{};}

%% file: preambles/scaling-preamble.tex
\definecolor{vinetotal}{HTML}{000000}
\definecolor{vinegraph}{HTML}{1F77B4}
\definecolor{vinesearch}{HTML}{D62728}
\pgfplotsset{
  vine/scaling axis/.style={
    tick align=outside, tick pos=left,
    grid=both,
    major grid style={draw=black!18, line width=0.3pt},
    minor grid style={draw=black!8,  line width=0.2pt},
    every axis plot/.append style={line width=0.9pt, mark size=1.9pt},
    legend cell align=left,
    legend style={font=\footnotesize, fill=white, fill opacity=1,
                  text opacity=1, draw=black!60, rounded corners=1pt},
    label style={font=\small},
    tick label style={font=\footnotesize},
  },
  vine/scaling total/.style={
    color=vinetotal, mark=*, mark options={fill=vinetotal, draw=vinetotal},
  },
  vine/scaling visibility/.style={
    color=vinegraph, mark=square*, mark options={fill=vinegraph, draw=vinegraph},
  },
  vine/scaling pressure/.style={
    color=vinesearch, mark=triangle*,
    mark options={fill=vinesearch, draw=vinesearch},
  },
}
\tikzset{
  vine/scaling callout/.style={
    draw=black!55, dashed, dash pattern=on 2.4pt off 1.8pt, line width=0.6pt,
  },
  vine/scaling callout label/.style={
    font=\footnotesize, text=black!70, anchor=west, xshift=2.5pt,
    fill=white, inner sep=1.6pt, rounded corners=0.6pt,
  },
}

%% file: figures/2D-maze-paths.tex
% colours matched to the hardware photograph
\definecolor{pathoptimal}{RGB}{82,217,94}
\definecolor{paththird}{RGB}{224,138,63}
\definecolor{pathfourth}{RGB}{82,138,217}
\definecolor{pathfifth}{RGB}{219,70,75}
% \definecolor{vineobstacle}{HTML}{1F77B4}
\definecolor{vineobstacle}{RGB}{0,0,0}
%% Four routes through the simulated maze, ranked by growth pressure.
%% Geometry from vine_planner (PathPlanner.compute_pressure_path), in metres.
%% Rotated 180 deg from the notebook view so the start is bottom-left.
\begin{tikzpicture}
\begin{axis}[
    scale only axis=true, enlargelimits=false,
    width=\linewidth, height=0.84098\linewidth,
    xmin=-0.02200, xmax=0.93640, ymin=-0.02200, ymax=0.78400,
    hide axis, clip=false,
]
\draw[vineobstacle, line width=3.0pt, line cap=butt] (axis cs:0.6096,0.1524) -- (axis cs:0.7620,0.1524);
\draw[vineobstacle, line width=3.0pt, line cap=butt] (axis cs:0.3048,0.1524) -- (axis cs:0.4572,0.1524);
\draw[vineobstacle, line width=3.0pt, line cap=butt] (axis cs:0.1524,0.1524) -- (axis cs:0.3048,0.1524);
\draw[vineobstacle, line width=3.0pt, line cap=butt] (axis cs:0.0000,0.1524) -- (axis cs:0.1524,0.1524);
\draw[vineobstacle, line width=3.0pt, line cap=butt] (axis cs:0.6096,0.3048) -- (axis cs:0.7620,0.3048);
\draw[vineobstacle, line width=3.0pt, line cap=butt] (axis cs:0.4572,0.3048) -- (axis cs:0.6096,0.3048);
\draw[vineobstacle, line width=3.0pt, line cap=butt] (axis cs:0.1524,0.3048) -- (axis cs:0.3048,0.3048);
\draw[vineobstacle, line width=3.0pt, line cap=butt] (axis cs:0.7620,0.4572) -- (axis cs:0.9144,0.4572);
\draw[vineobstacle, line width=3.0pt, line cap=butt] (axis cs:0.4572,0.4572) -- (axis cs:0.6096,0.4572);
\draw[vineobstacle, line width=3.0pt, line cap=butt] (axis cs:0.3048,0.4572) -- (axis cs:0.4572,0.4572);
\draw[vineobstacle, line width=3.0pt, line cap=butt] (axis cs:0.1524,0.4572) -- (axis cs:0.3048,0.4572);
\draw[vineobstacle, line width=3.0pt, line cap=butt] (axis cs:0.7620,0.6096) -- (axis cs:0.9144,0.6096);
\draw[vineobstacle, line width=3.0pt, line cap=butt] (axis cs:0.6096,0.6096) -- (axis cs:0.7620,0.6096);
\draw[vineobstacle, line width=3.0pt, line cap=butt] (axis cs:0.4572,0.6096) -- (axis cs:0.6096,0.6096);
\draw[vineobstacle, line width=3.0pt, line cap=butt] (axis cs:0.1524,0.6096) -- (axis cs:0.3048,0.6096);
%% Border: same cell segments, colour and weight as the obstacles.
%% Top and bottom run the full width; the openings are in the side walls,
%% at the cell the vine enters (bottom left) and leaves by (top right).
\draw[vineobstacle, line width=3.0pt, line cap=rect] (axis cs:0.9144,0.0000) -- (axis cs:0.9144,0.1524);
\draw[vineobstacle, line width=3.0pt, line cap=rect] (axis cs:0.0000,0.1524) -- (axis cs:0.0000,0.3048);
\draw[vineobstacle, line width=3.0pt, line cap=rect] (axis cs:0.9144,0.1524) -- (axis cs:0.9144,0.3048);
\draw[vineobstacle, line width=3.0pt, line cap=rect] (axis cs:0.0000,0.3048) -- (axis cs:0.0000,0.4572);
\draw[vineobstacle, line width=3.0pt, line cap=rect] (axis cs:0.9144,0.3048) -- (axis cs:0.9144,0.4572);
\draw[vineobstacle, line width=3.0pt, line cap=rect] (axis cs:0.0000,0.4572) -- (axis cs:0.0000,0.6096);
\draw[vineobstacle, line width=3.0pt, line cap=rect] (axis cs:0.9144,0.4572) -- (axis cs:0.9144,0.6096);
\draw[vineobstacle, line width=3.0pt, line cap=rect] (axis cs:0.0000,0.6096) -- (axis cs:0.0000,0.7620);
\draw[vineobstacle, line width=3.0pt, line cap=butt] (axis cs:0.0000,0.0000) -- (axis cs:0.1524,0.0000);
\draw[vineobstacle, line width=3.0pt, line cap=butt] (axis cs:0.0000,0.7620) -- (axis cs:0.1524,0.7620);
\draw[vineobstacle, line width=3.0pt, line cap=butt] (axis cs:0.1524,0.0000) -- (axis cs:0.3048,0.0000);
\draw[vineobstacle, line width=3.0pt, line cap=butt] (axis cs:0.1524,0.7620) -- (axis cs:0.3048,0.7620);
\draw[vineobstacle, line width=3.0pt, line cap=butt] (axis cs:0.3048,0.0000) -- (axis cs:0.4572,0.0000);
\draw[vineobstacle, line width=3.0pt, line cap=butt] (axis cs:0.3048,0.7620) -- (axis cs:0.4572,0.7620);
\draw[vineobstacle, line width=3.0pt, line cap=butt] (axis cs:0.4572,0.0000) -- (axis cs:0.6096,0.0000);
\draw[vineobstacle, line width=3.0pt, line cap=butt] (axis cs:0.4572,0.7620) -- (axis cs:0.6096,0.7620);
\draw[vineobstacle, line width=3.0pt, line cap=butt] (axis cs:0.6096,0.0000) -- (axis cs:0.7620,0.0000);
\draw[vineobstacle, line width=3.0pt, line cap=butt] (axis cs:0.6096,0.7620) -- (axis cs:0.7620,0.7620);
\draw[vineobstacle, line width=3.0pt, line cap=butt] (axis cs:0.7620,0.0000) -- (axis cs:0.9144,0.0000);
\draw[vineobstacle, line width=3.0pt, line cap=butt] (axis cs:0.7620,0.7620) -- (axis cs:0.9144,0.7620);
\addplot[pathfifth, line width=1.8pt, line join=round, line cap=round, mark=none] coordinates {(0.00000,0.00000) (0.45872,0.15240) (0.45872,0.15392) (0.15088,0.30480) (0.15088,0.30632) (0.15088,0.45872) (0.45568,0.61112) (0.91440,0.76200)};
\addplot[pathfourth, line width=1.8pt, line join=round, line cap=round, mark=none] coordinates {(0.00000,0.00000) (0.76352,0.15240) (0.76352,0.30632) (0.61112,0.45872) (0.45568,0.60960) (0.45568,0.61112) (0.91440,0.76200)};
\addplot[paththird, line width=1.8pt, line join=round, line cap=round, mark=none] coordinates {(0.00000,0.00000) (0.45872,0.15240) (0.45872,0.15392) (0.30632,0.30632) (0.15088,0.45720) (0.15088,0.45872) (0.15088,0.61112) (0.91440,0.76200)};
\addplot[pathoptimal, line width=1.8pt, line join=round, line cap=round, mark=none] coordinates {(0.00000,0.00000) (0.45872,0.15240) (0.45872,0.15392) (0.30632,0.30632) (0.15088,0.45720) (0.15088,0.45872) (0.45568,0.61112) (0.91440,0.76200)};
% \addplot[only marks, mark=*, mark size=2.2pt, black] coordinates {(0.0000,0.0000)};
% \addplot[only marks, mark=square*, mark size=2.2pt, black] coordinates {(0.9144,0.7620)};
\end{axis}
\end{tikzpicture}

%% file: figures/2D-maze-pressure.tex
% colours matched to the hardware photograph
\definecolor{pathbest}{RGB}{82,217,94}
\definecolor{path2nd}{RGB}{82,138,217}
\definecolor{path3rd}{RGB}{224,138,63}
\definecolor{path4th}{RGB}{219,70,75}
%% Growth pressure against deployed length, in kPa and metres.
%% The legend sits outside the axis, to its right.
\begin{tikzpicture}
\begin{axis}[
    width=0.80\linewidth, height=0.54\linewidth,
    xlabel={Deployed length (m)}, ylabel={Predicted pressure (kPa)},
    xmin=0, xmax=1.886, ymin=6.5, ymax=31.1,
    xtick={0,0.5,1.0,1.5},
    ytick={10,15,20,25,30},
    tick align=outside, tick pos=left,
    tick label style={font=\scriptsize}, label style={font=\small},
    grid=both, grid style={draw=black!15, line width=0.3pt},
    reverse legend,
    legend cell align=left,
    legend style={font=\small, at={(1.04,1.0)}, anchor=north west,
                  draw=none, fill=none, row sep=2pt,
                  /tikz/every even column/.append style={column sep=3pt}},
]
\addplot[path4th, line width=1.8pt, mark=none] coordinates {(0.00000,7.3429) (0.48338,7.7158) (0.48338,10.5141) (0.48490,10.5153) (0.48490,14.1340) (0.82773,14.3984) (0.82773,19.6514) (0.82926,19.6526) (0.82926,19.6526) (0.98166,19.7701) (0.98166,27.2203) (1.32243,27.4831) (1.32243,28.6855) (1.80533,29.0579)};
\addlegendentry{4th}
\addplot[path3rd, line width=1.8pt, mark=none] coordinates {(0.00000,7.3429) (0.77859,7.9435) (0.77859,11.2068) (0.93251,11.3256) (0.93251,13.9666) (1.14804,14.1328) (1.14804,14.1904) (1.36466,14.3575) (1.36466,17.9246) (1.36619,17.9257) (1.36619,25.6733) (1.84909,26.0458)};
\addlegendentry{3rd}
\addplot[path2nd, line width=1.8pt, mark=none] coordinates {(0.00000,7.3429) (0.48338,7.7158) (0.48338,10.5141) (0.48490,10.5153) (0.48490,12.9288) (0.70043,13.0951) (0.70043,13.1477) (0.91706,13.3148) (0.91706,16.5830) (0.91858,16.5841) (0.91858,16.5841) (1.07098,16.7017) (1.07098,24.7212) (1.84927,25.3216)};
\addlegendentry{2nd}
\addplot[pathbest, line width=1.8pt, mark=none] coordinates {(0.00000,7.3429) (0.48338,7.7158) (0.48338,10.5141) (0.48490,10.5153) (0.48490,12.9288) (0.70043,13.0951) (0.70043,13.1477) (0.91706,13.3148) (0.91706,16.5830) (0.91858,16.5841) (0.91858,22.7048) (1.25936,22.9677) (1.25936,23.9576) (1.74225,24.3301)};
\addlegendentry{Best}
\addplot[black!55, dashed, line width=1.0pt, mark=none, forget plot] coordinates {(0,25.0) (1.886,25.0)};
\node[anchor=south west, font=\scriptsize, black!65, inner sep=2pt] at (axis cs:0.03,25.0) {burst pressure};
\end{axis}
\end{tikzpicture}

%% file: figures/simple-vine.tex
\begin{subfigure}[t]{0.60\linewidth}
    \centering
    % ---------- vine robot with two turns (piecewise linear) -----------
    \pgfmathsetlength{\vineunit}{\linewidth/9.32}
    \begin{tikzpicture}[x=\vineunit,y=\vineunit,>=Latex,font=\footnotesize]
        \pgfmathsetmacro{\lA}{3.0}\pgfmathsetmacro{\lB}{3.4}\pgfmathsetmacro{\lC}{2.6}
        \pgfmathsetmacro{\hA}{30}\pgfmathsetmacro{\hB}{-16}\pgfmathsetmacro{\hC}{44}
        \pgfmathsetmacro{\Wb}{0.42}\pgfmathsetmacro{\hw}{\Wb/2}
        \pgfmathsetmacro{\Rarc}{1.20}\pgfmathsetmacro{\Rlab}{1.70}\pgfmathsetmacro{\bias}{0.30}
        \pgfmathsetmacro{\Cext}{1.80}
        \pgfmathsetmacro{\doffA}{0.95}\pgfmathsetmacro{\dlabA}{1.32}
        \pgfmathsetmacro{\doffB}{1.55}\pgfmathsetmacro{\dlabB}{1.92}
        \pgfmathsetlength{\bodyouter}{\Wb\vineunit}
        \pgfmathsetlength{\bodyinner}{\dimexpr\bodyouter-1.2pt\relax}
        \tikzset{ext/.style={black!45,line width=0.3pt},
               dim/.style={black!70,line width=0.4pt,{Bar[width=4pt]}-{Bar[width=4pt]}},
               tag/.style={circle,draw=black,fill=white,inner sep=1pt,minimum size=11pt,font=\scriptsize}}
        \coordinate (P0) at (0,0);
        \coordinate (P1) at ($(P0)+(\hA:\lA)$);
        \coordinate (P2) at ($(P1)+(\hB:\lB)$);
        \coordinate (P3) at ($(P2)+(\hC:\lC)$);
        % inner-wall miter vertices: turn 1 rides the lower wall, turn 2 the upper wall
        \pgfmathsetmacro{\bisA}{(\hA+\hB)/2-90}\pgfmathsetmacro{\radA}{\hw/cos((\hA-\hB)/2)}
        \pgfmathsetmacro{\bisB}{(\hB+\hC)/2+90}\pgfmathsetmacro{\radB}{\hw/cos((\hC-\hB)/2)}
        \coordinate (V1) at ($(P1)+(\bisA:\radA)$);
        \coordinate (V2) at ($(P2)+(\bisB:\radB)$);
        % headings of the three RED segments, normalised to (-180,180]
        \pgfmathanglebetweenpoints{\pgfpointanchor{P0}{center}}{\pgfpointanchor{V1}{center}}%
        \edef\tmp{\pgfmathresult}\pgfmathsetmacro{\sA}{\tmp>180?\tmp-360:\tmp}
        \pgfmathanglebetweenpoints{\pgfpointanchor{V1}{center}}{\pgfpointanchor{V2}{center}}%
        \edef\tmp{\pgfmathresult}\pgfmathsetmacro{\sB}{\tmp>180?\tmp-360:\tmp}
        \pgfmathanglebetweenpoints{\pgfpointanchor{V2}{center}}{\pgfpointanchor{P3}{center}}%
        \edef\tmp{\pgfmathresult}\pgfmathsetmacro{\sC}{\tmp>180?\tmp-360:\tmp}
        \useasboundingbox (-0.50,-1.15) rectangle (8.82,3.05);  % 9.32 x 4.20
        % body
        \draw[line width=\bodyouter,line join=miter,black] (P0)--(P1)--(P2)--(P3);
        \draw[line width=\bodyinner,line join=miter,white] (P0)--(P1)--(P2)--(P3);
        % deviation angles between consecutive red segments, at the red vertices
        \draw[black!45,dashed,line width=0.4pt] (V1)--($(V1)+(\sA:\Cext)$);
        \draw[black!70,line width=0.5pt] (V1)+(\sA:\Rarc) arc[start angle=\sA,end angle=\sB,radius=\Rarc];
        \node at ($(V1)+({\sA+\bias*(\sB-\sA)}:\Rlab)$) {$\theta_1$};
        \draw[black!45,dashed,line width=0.4pt] (V2)--($(V2)+(\sB:\Cext)$);
        \draw[black!70,line width=0.5pt] (V2)+(\sB:\Rarc) arc[start angle=\sB,end angle=\sC,radius=\Rarc];
        \node at ($(V2)+({\sB+\bias*(\sC-\sB)}:\Rlab)$) {$\theta_2$};
        % l1 : red segment P0--V1, above
        \draw[ext] (P0)--($(P0)+({\sA+90}:\doffA)$);
        \draw[ext] (V1)--($(V1)+({\sA+90}:\doffA)$);
        \draw[dim] ($(P0)+({\sA+90}:\doffA)$)--($(V1)+({\sA+90}:\doffA)$);
        \node at ($(P0)!0.5!(V1)+({\sA+90}:\dlabA)$) {$ l_1$};
        % l2 : red segment V1--V2, below
        \draw[ext] (V1)--($(V1)+({\sB-90}:\doffB)$);
        \draw[ext] (V2)--($(V2)+({\sB-90}:\doffB)$);
        \draw[dim] ($(V1)+({\sB-90}:\doffB)$)--($(V2)+({\sB-90}:\doffB)$);
        \node at ($(V1)!0.5!(V2)+({\sB-90}:\dlabB)$) {$ l_2$};
        % l3 : red segment V2--P3, below
        \draw[ext] (V2)--($(V2)+({\sC-90}:\doffB)$);
        \draw[ext] (P3)--($(P3)+({\sC-90}:\doffB)$);
        \draw[dim] ($(V2)+({\sC-90}:\doffB)$)--($(P3)+({\sC-90}:\doffB)$);
        \node at ($(V2)!0.5!(P3)+({\sC-90}:\dlabB)$) {$ l_3$};
        % path: turns ride the inner wall
        \draw[line width=1.5pt,line join=miter,line cap=round,red!65!black] (P0)--(V1)--(V2)--(P3);
        \foreach \p in {P0,V1,V2,P3}{\fill[red!45!black] (\p) circle (1.5pt);}
        % circled labels
        \node[tag] at ($(P0)+(-62:0.62)$)   {0};
        \node[tag] at ($(V1)+(-62:0.80)$)   {1};
        \node[tag] at ($(V2)+(\bisB:0.62)$) {2};
        \node[tag] at ($(P3)+(50:0.55)$)    {3};
    \end{tikzpicture}
    % -------------------------------------------------------------------
    \caption{}
    % \caption{Simple vine robot with three straight segments $ l_1,  l_2, l_3$ and two turns $\theta_1,\theta_2$.  We assume the initial tension is $T$ which is at the start of the robot.}
    \label{fig:simple-vine}
\end{subfigure}

%% file: figures/two-cylinders.tex
\hfill
\begin{subfigure}[t]{0.36\linewidth}
    \centering
    % ---------- capstan schematic: rope over two cylinders -------------
    % Parameters: radius, three segment headings [deg], three lengths.
    % Wrap angles follow: theta_1 = phiA - phiB, theta_2 = phiC - phiB.
    \pgfmathsetlength{\capstanunit}{\linewidth/7.5}
    \begin{tikzpicture}[x=\capstanunit,y=\capstanunit,>=Latex,font=\footnotesize]
      \pgfmathsetmacro{\R}{1.0}
      \pgfmathsetmacro{\phiA}{30}     % held segment heading
      \pgfmathsetmacro{\phiB}{-78}    % inter-cylinder segment heading
      \pgfmathsetmacro{\phiC}{32}     % outgoing segment heading
      \pgfmathsetmacro{\Lmid}{3.0}
      \pgfmathsetmacro{\Lin}{1.9}
      \pgfmathsetmacro{\Lout}{1.9}
      % derived tangency angles
      \pgfmathsetmacro{\aIn}{\phiA+90}
      \pgfmathsetmacro{\aMidI}{\phiB+90}
      \pgfmathsetmacro{\aMidII}{\phiB-90}
      \pgfmathsetmacro{\aOut}{\phiC-90}
      % derived points
      \coordinate (C1) at (0,0);
      \coordinate (T0) at ($(C1)+(\aIn:\R)$);
      \coordinate (T1) at ($(C1)+(\aMidI:\R)$);
      \coordinate (T2) at ($(T1)+(\phiB:\Lmid)$);
      \coordinate (C2) at ($(T2)+(\aMidI:\R)$);
      \coordinate (T3) at ($(C2)+(\aOut:\R)$);
      \coordinate (P0) at ($(T0)-(\phiA:\Lin)$);
      \coordinate (P3) at ($(T3)+(\phiC:\Lout)$);
      \useasboundingbox (-2.20,-3.60) rectangle (5.10,1.15);
      % cylinders
      \draw[fill=black!8,draw=black,line width=0.5pt] (C1) circle (\R);
      \draw[fill=black!8,draw=black,line width=0.5pt] (C2) circle (\R);
      \fill (C1) circle (1pt);  \fill (C2) circle (1pt);
      % wrap angles
      \draw[black!55,line width=0.4pt] (C1) -- (T0);
      \draw[black!55,line width=0.4pt] (C1) -- (T1);
      \draw[black!70,line width=0.5pt]
            (C1)+(\aMidI:0.3) arc[start angle=\aMidI,end angle=\aIn,radius=0.3];
      \node at ($(C1)+({(\aIn+\aMidI)/2}:0.65)$) {$\theta_1$};
      \draw[black!55,line width=0.4pt] (C2) -- (T2);
      \draw[black!55,line width=0.4pt] (C2) -- (T3);
      \draw[black!70,line width=0.5pt]
            (C2)+(\aMidII:0.3) arc[start angle=\aMidII,end angle=\aOut,radius=0.3];
      \node at ($(C2)+({(\aMidII+\aOut)/2}:0.6)$) {$\theta_2$};
      % rope
      \draw[line width=2.2pt,line cap=round,red!65!black,->]
            (P0) -- (T0) arc[start angle=\aIn,    end angle=\aMidI, radius=\R]
            -- (T2) arc[start angle=\aMidII, end angle=\aOut,  radius=\R] -- (P3);
      % tensions
      \node at ($(P0)!0.5!(T0)+({\phiA+90}:0.8)$) {$T_{\mathrm{hold}}$};
      \node at ($(T1)!0.5!(T2)+({\phiB+90}:1.15)$) {$T_{\mathrm{out},1}$};
      \node at ($(P3)+({\phiC+90}:0.50)$)          {$T_{\mathrm{out},2}$};
    \end{tikzpicture}
    % -------------------------------------------------------------------
    % \captionsetup{justification=raggedright, singlelinecheck=false}
    % \caption{Applying the Capstan equation to two consecutive cylinders, we obtain $T_{\mathrm{out},2}=T_{\mathrm{hold}}e^{\theta_1}e^{\theta_2}$ and thus contributes multiplicatively.}
    \caption{}
    \label{fig:capstan}
\end{subfigure}%

%% file: figures/runtime-scaling.tex
% Generated by make_scaling_tikz.py from scaling_summary.csv -- do not
% edit by hand; re-run the generator instead.
%
% Body only: no figure environment, no caption, no label. Drop into an
% existing figure with \input{runtime-scaling}. Needs
% \input{scaling-preamble} in the document preamble.
%
% xmode/ymode must stay in the axis options: pgfplots fixes the axis mode
% before styles expand, so they cannot be moved into vine/scaling axis.
  \begin{tikzpicture}
    \begin{axis}[
      name=vinescaling,
      xmode=log, ymode=log,
      log basis x=10, log basis y=10,
      vine/scaling axis,
      scale only axis,
      width=0.78\columnwidth,
      height=0.60\columnwidth,
      xlabel={Vertices in the visibility graph, $n$},
      ylabel={Wall-clock time (s)},
      xmin=280, xmax=1.3e+06,
      ymin=0.0018, ymax=22000,
      xtick={1e3,1e4,1e5,1e6},
      ytick={1e-2,1e-1,1e0,1e1,1e2,1e3,1e4},
      legend pos=north west,
    ]
      % callout rule first, so the data markers draw on top of it
      \draw[vine/scaling callout]
        (axis cs:54712,0.0018) --
        (axis cs:54712,22000)
        node[vine/scaling callout label, pos=0.049]
        {Fig.~\ref{fig:dense-maze}};
      \addplot[vine/scaling visibility]
        coordinates {(403,0.003469) (1236,0.0183188) (4072,0.159768) (12263,1.3115) (40986,11.1198) (54712,20.5917) (81908,42.8099) (122890,95.7045) (245651,387.743) (409138,1101.93) (614604,2488.81) (819400,5087.59)};
      \addlegendentry{visibility graph ($\propto n^{2.03}$)}
      \addplot[vine/scaling pressure]
        coordinates {(403,0.0130482) (1236,0.0313212) (4072,0.0762987) (12263,0.157494) (40986,0.326495) (54712,0.415639) (81908,0.487476) (122890,0.600368) (245651,0.863212) (409138,1.21312) (614604,1.40973) (819400,1.73643)};
      \addlegendentry{Dijkstra's algorithm ($\propto n^{0.55}$)}
      \addplot[vine/scaling total]
        coordinates {(403,0.0166774) (1236,0.0497772) (4072,0.235205) (12263,1.4559) (40986,11.4406) (54712,21.022) (81908,43.2865) (122890,96.3163) (245651,388.606) (409138,1103.17) (614604,2490.22) (819400,5089.33)};
      \addlegendentry{total ($\propto n^{2.02}$)}
    \end{axis}
    % Second x axis along the top of the same box, y axis hidden.
    % Ticks sit at the measured $n$ of the round-obstacle-count
    % courses, so no constant vertices-per-obstacle ratio is assumed.
    \begin{axis}[
      xmode=log, log basis x=10,
      scale only axis,
      width=0.78\columnwidth,
      height=0.60\columnwidth,
      at={(vinescaling.south west)}, anchor=south west,
      axis x line*=top,
      hide y axis,
      tick align=outside,
      xmin=280, xmax=1.3e+06,
      ymin=0, ymax=1,
      xtick={403,4072,40986,409138},
      xticklabels={$10^{2}$,$10^{3}$,$10^{4}$,$10^{5}$},
      xlabel={Obstacles, $m$},
      label style={font=\small},
      tick label style={font=\footnotesize},
    ]
    \end{axis}
  \end{tikzpicture}

%% file: figures/dense-maze-spy.tex
% =====================================================================
% Magnified inset ("spy") over dense-maze.tex
% ---------------------------------------------------------------------
% dense-maze.tex is included unmodified. Everything specific to the inset
% lives here, so regenerating the figure from Python cannot clobber it.
%
% Requires, in the document preamble:
%   \usepackage{pgfplots}
%   \pgfplotsset{compat=1.18}
%   \usetikzlibrary{spy, calc}
% =====================================================================

% ---- what to magnify, in course units (metres here) ------------------
% The two paths leave the start together, separate briefly, then run as one
% from (2.3915, 1.9013) through fourteen shared vertices to (3.0174, 2.9121),
% which is the last point they have in common -- they meet again only at the
% target. Either end of that run is worth a look:
%
%   (3.017, 2.912)  the divergence point, where the paths split for good
%   (2.667, 2.454)  the arc-length midpoint of the shared run, where the
%                   two lie on top of one another
%
\def\SpyX{3.017}
\def\SpyY{2.912}

% ---- the axis limits of dense-maze.tex, needed to place the target ---
% These must match `xmin/xmax` and `ymin/ymax` in the generated file.
\def\CourseXmin{0}\def\CourseXmax{8}
\def\CourseYmin{0}\def\CourseYmax{8}

% ---- inset appearance ------------------------------------------------
\def\SpyMagnification{6}   % zoom factor
\def\SpySize{2cm}          % diameter of the magnified circle
\def\SpyColour{green!55!black}
\def\SpyLine{1pt}          % both circles and the connector between them
% Where the magnified circle sits, as a fraction of the axis box.
\def\SpyNodeU{0.8}
\def\SpyNodeV{0.2}

% ---- figure size -----------------------------------------------------
% The generated axis asks for width=\linewidth but says nothing about
% height, so pgfplots falls back to its default (about 207pt). Under
% `axis equal image` the smaller of the two bounds wins, and for a square
% course that is the height -- which is why the figure came out at 85% of
% the column with empty margins either side. Allowing a full \linewidth of
% height lets the width bind instead, taking the figure to 99%.
%
% For a course that is taller than it is wide, raise this past \linewidth.
\def\CourseHeight{\linewidth}

% ---- fonts -----------------------------------------------------------
% The generated `vine/course axis` sets a font for the legend and for
% nothing else, so the tick labels and the axis labels fall through to the
% document's normal size -- 10pt against the legend's 7pt. That is the
% mismatch: the legend was never the odd one out, the axis was.
%
% These match the hand-built figures (ticks \scriptsize, labels \small).
\def\CourseTickFont{\scriptsize}
\def\CourseLabelFont{\small}

% ---- legend ----------------------------------------------------------
% Position of the legend's top-left corner, as a fraction of the axis box.
\def\LegendU{0.015}
\def\LegendV{0.985}
\def\LegendFont{\scriptsize}
% One line each. That only fits because the marker rows moved into their
% own box below: a single box carrying all four rows reached far enough
% across to bury the goal marker, which sits about 0.66 of the way across
% and 0.81 of the way up.
\def\LegendPressure{min pressure ($318$\,kPa, $700^{\circ}$, $6.29$\,m)}
\def\LegendDistance{min distance ($19{,}430$\,kPa, $1460^{\circ}$, $6.03$\,m)}
\def\LegendStart{start}
\def\LegendGoal{goal}

\newif\ifVineOverlayDone
\begingroup
\global\VineOverlayDonefalse
% ---------------------------------------------------------------------
% The obstacles of a dense course are drawn on the `vineobstacles` PGF
% layer, which is composited at \end{tikzpicture} -- that is, *after* the
% spy scope has closed. Anything on a layer is therefore invisible to the
% magnifier, and the inset would show enlarged paths over unenlarged
% obstacles. Flattening the layer puts the obstacles back in the picture
% body where spy re-executes them.
%
% The cost is that the obstacles now paint after the axis rather than
% beneath it. That is almost invisible here: a planned path lies in free
% space by construction, so obstacles only meet it where it grazes one.
% ---------------------------------------------------------------------
\def\pgfonlayer#1{}
\def\endpgfonlayer{}

\pgfplotsset{vine/course axis/.append style={
  height=\CourseHeight,
  tick label style={font=\CourseTickFont},
  label style={font=\CourseLabelFont},
}}

% Fractions of the axis box, which is the coordinate frame set up below.
\pgfmathsetmacro{\SpyU}{(\SpyX-\CourseXmin)/(\CourseXmax-\CourseXmin)}
\pgfmathsetmacro{\SpyV}{(\SpyY-\CourseYmin)/(\CourseYmax-\CourseYmin)}

\tikzset{
  every picture/.append style={
    % `spy using outlines` expands to `spy scope={<its own defaults>, #1}`,
    % so whatever is written inside these braces is applied last and wins.
    % That matters: the library hard-codes `very thin` on the small circle,
    % `thick` on the magnified one, and `\draw[thin]` on the connector that
    % `connect spies` installs. None of them can be reached from the options
    % of \spy itself, which is why setting `line width` there does nothing.
    % Setting `spy connection path` directly replaces what `connect spies`
    % would have done, so that key is no longer needed.
    spy using outlines={circle,
      magnification=\SpyMagnification,
      size=\SpySize,
      every spy on node/.append style={draw=\SpyColour, line width=\SpyLine},
      every spy in node/.append style={draw=\SpyColour, line width=\SpyLine,
                                       fill=white},
      spy connection path={\draw[\SpyColour, line width=\SpyLine]
                             (tikzspyonnode) -- (tikzspyinnode);}},
    % The inset has to be issued from inside the picture, since the axis
    % node it is positioned against is local to it. `execute at end
    % picture` runs after the figure's own content and before the picture
    % closes, which is exactly the window required.
    execute at end picture={%
      % pgfplots opens internal pictures of its own while laying the axis
      % out; the axis node does not exist in those, so skip them.
      \ifcsname pgf@sh@ns@vinecourseaxis\endcsname
      % The legend below is itself a pgfplots axis, which opens pictures of
      % its own; without this the hook would re-enter and recurse.
      \ifVineOverlayDone\else\global\VineOverlayDonetrue
        % Unit vectors taken from the axis corners, so (u,v) runs from
        % (0,0) at the bottom-left of the axis to (1,1) at the top-right
        % whatever the figure is scaled to.
        \begin{scope}[shift={(vinecourseaxis.south west)},
          x={($(vinecourseaxis.south east)-(vinecourseaxis.south west)$)},
          y={($(vinecourseaxis.north west)-(vinecourseaxis.south west)$)}]
          \coordinate (spy target) at (\SpyU,\SpyV);
          \coordinate (spy inset)  at (\SpyNodeU,\SpyNodeV);
          \coordinate (vine legend anchor) at (\LegendU,\LegendV);
        \end{scope}
        % The path and marker styles live in the /pgfplots/ key path, where
        % a plain \draw cannot reach them: \draw would look in /tikz/, and
        % the nested `vine/path` inside each of them fails there. Building
        % the legend from \addlegendimage inside a pgfplots axis sidesteps
        % that entirely -- the styles resolve exactly as they do for the
        % figure's own \addplot calls, so nothing is restated here.
        %
        % Each axis is one point across with its lines and ticks switched
        % off. `hide axis` would be the obvious way to do that, but it
        % suppresses the legend along with everything else.
        %
        % Two legends rather than one. The path rows are wide and the marker
        % rows are short, so a single box would carry a wide empty corner
        % over the figure. Splitting them and letting each box shrink to its
        % own contents leaves an L, which is the shape the entries actually
        % occupy. `legend columns=-1` then sets the markers side by side.
        \begin{axis}[scale only axis, width=1pt, height=1pt,
          at={(vine legend anchor)}, anchor=north west,
          axis line style={draw=none}, ticks=none,
          xmin=0, xmax=1, ymin=0, ymax=1,
          legend cell align=left,
          legend style={font=\LegendFont, at={(0,0)}, anchor=north west,
            name=vinepathlegend,
            fill=white, fill opacity=1, text opacity=1,
            draw=black!60, rounded corners=1pt, inner sep=1.5pt}]
          \addlegendimage{vine/pressure path}\addlegendentry{\LegendPressure}
          \addlegendimage{vine/distance path}\addlegendentry{\LegendDistance}
        \end{axis}
        % Anchored to the first legend's own node rather than to a fixed
        % offset. The boxes are sized by the font, in points, while the axis
        % is sized by the column, so any fraction-of-the-axis gap would drift
        % the moment either changed.
        \begin{axis}[scale only axis, width=1pt, height=1pt,
          at={(vinepathlegend.south west)}, anchor=north west,
          axis line style={draw=none}, ticks=none,
          xmin=0, xmax=1, ymin=0, ymax=1,
          legend cell align=left, legend columns=-1,
          legend style={font=\LegendFont, at={(0,0)}, anchor=north west,
            fill=white, fill opacity=1, text opacity=1,
            draw=black!60, rounded corners=1pt, inner sep=1.5pt,
            /tikz/every even column/.append style={column sep=6pt}}]
          \addlegendimage{only marks, mark=vinestart}\addlegendentry{\LegendStart}
          \addlegendimage{only marks, mark=vinetarget}\addlegendentry{\LegendGoal}
        \end{axis}

        \spy on (spy target) in node at (spy inset);

      \fi\fi
    },
  },
}

\input{figures/dense-maze}
\endgroup

%% file: figures/3D-maze-paths.tex
% colours matched to the hardware photograph
\definecolor{pathoptimal}{RGB}{82,217,94}%
\definecolor{paththird}{RGB}{82,138,217}%
\definecolor{pathfourth}{RGB}{224,138,63}%
\definecolor{pathfifth}{RGB}{219,70,75}%
\definecolor{vineobstacle}{RGB}{0,0,0}%
%% Four routes through the simulated 3x3x4 maze, ranked by growth pressure:
%% ranks 1 (green), 3 (orange), 4 (blue) and 6 (red).
%% Geometry from vine_planner (PathPlanner.compute_pressure_path), in inches.
%% Figure axes: x width, y depth, z height (up). Obstacles are planar and
%% the scene is emitted back-to-front by height, since pgfplots has no
%% depth buffer and draw order is the only depth cue available.
%%
%% Camera: azimuth 175 deg, elevation 12 deg, set through the axis
%% unit vectors so the 3:3:4 box proportions stay exact. The picture is
%% exactly \linewidth wide and 1.4118\linewidth tall.
%% CAMERA. Change these two numbers to re-aim the view; the unit vectors
%% below recompute themselves. Azimuth is measured from the -y axis and
%% elevation above the xy plane, both in degrees (MATLAB/pgfplots sense).
%% Keep elevation strictly positive: the walls and paths are emitted
%% back-to-front by height, and looking up from below would invert that.
\def\vineaz{175}%
\def\vineel{12}%
%% WIDTH. The picture is \vinescale x \linewidth wide. Set \vinescale
%% before \input to size it, e.g. to match a photograph beside it.
%% It must be a bare number -- it is multiplied in, not appended to a length.
\providecommand{\vinescale}{1}%
%% Projected width of the bounding box, so the picture is exactly \linewidth.
\pgfmathsetmacro{\vinespan}{18*abs(cos(\vineaz))+18*abs(sin(\vineaz))}%
\pgfmathsetmacro{\vinexx}{\vinescale*cos(\vineaz)/\vinespan}%
\pgfmathsetmacro{\vinexy}{-\vinescale*sin(\vineaz)*sin(\vineel)/\vinespan}%
\pgfmathsetmacro{\vineyx}{\vinescale*sin(\vineaz)/\vinespan}%
\pgfmathsetmacro{\vineyy}{\vinescale*cos(\vineaz)*sin(\vineel)/\vinespan}%
\pgfmathsetmacro{\vinezy}{\vinescale*cos(\vineel)/\vinespan}%
%% One knob for all four paths: change line width here.
\tikzset{vinepath/.style={line width=1.6pt, line join=round, line cap=round, mark=none}}%
\begin{tikzpicture}
\begin{axis}[
    scale only axis=true, enlargelimits=false,
    xmin=0, xmax=18, ymin=0, ymax=18, zmin=0, zmax=24,
    x={(\vinexx\linewidth,\vinexy\linewidth)},
    y={(\vineyx\linewidth,\vineyy\linewidth)},
    z={(0pt,\vinezy\linewidth)},
    hide axis, clip=false,
]
% course boundary, behind the scene (commented out)
% \draw[black, line width=1.0pt] (axis cs:0,0,0) -- (axis cs:18,0,0) -- (axis cs:18,18,0) -- (axis cs:0,18,0) -- cycle;
% \draw[black, line width=1.0pt] (axis cs:0,0,24) -- (axis cs:18,0,24) -- (axis cs:18,18,24) -- (axis cs:0,18,24) -- cycle;
% \draw[black, line width=1.0pt] (axis cs:0,0,0) -- (axis cs:0,0,24);
% \draw[black, line width=1.0pt] (axis cs:0,18,0) -- (axis cs:0,18,24);
% \draw[black, line width=1.0pt] (axis cs:18,0,0) -- (axis cs:18,0,24);
% \draw[black, line width=1.0pt] (axis cs:18,18,0) -- (axis cs:18,18,24);
% walls and paths, back to front
\addplot3[pathfifth, vinepath] coordinates {(18.0000,18.0000,0.0000) (4.1584,12.1188,6.0000)};
\addplot3[pathfourth, vinepath] coordinates {(18.0000,18.0000,0.0000) (5.9400,1.9800,6.0000)};
\addplot3[paththird, vinepath] coordinates {(18.0000,18.0000,0.0000) (5.9400,5.9400,6.0000)};
\addplot3[pathoptimal, vinepath] coordinates {(18.0000,18.0000,0.0000) (5.9400,16.0200,6.0000)};
\filldraw[vineobstacle, draw=white, line width=0.4pt, opacity=1] (axis cs:6,12,6) -- (axis cs:12,12,6) -- (axis cs:12,18,6) -- (axis cs:6,18,6) -- cycle;
\filldraw[vineobstacle, draw=white, line width=0.4pt, opacity=1] (axis cs:12,12,6) -- (axis cs:18,12,6) -- (axis cs:18,18,6) -- (axis cs:12,18,6) -- cycle;
\filldraw[vineobstacle, draw=white, line width=0.4pt, opacity=1] (axis cs:0,6,6) -- (axis cs:6,6,6) -- (axis cs:6,12,6) -- (axis cs:0,12,6) -- cycle;
\filldraw[vineobstacle, draw=white, line width=0.4pt, opacity=1] (axis cs:6,6,6) -- (axis cs:12,6,6) -- (axis cs:12,12,6) -- (axis cs:6,12,6) -- cycle;
\filldraw[vineobstacle, draw=white, line width=0.4pt, opacity=1] (axis cs:12,6,6) -- (axis cs:18,6,6) -- (axis cs:18,12,6) -- (axis cs:12,12,6) -- cycle;
\filldraw[vineobstacle, draw=white, line width=0.4pt, opacity=1] (axis cs:6,0,6) -- (axis cs:12,0,6) -- (axis cs:12,6,6) -- (axis cs:6,6,6) -- cycle;
\filldraw[vineobstacle, draw=white, line width=0.4pt, opacity=1] (axis cs:12,0,6) -- (axis cs:18,0,6) -- (axis cs:18,6,6) -- (axis cs:12,6,6) -- cycle;
\addplot3[pathfifth, vinepath] coordinates {(4.1584,12.1188,6.0000) (4.0200,12.0600,6.0600) (12.0600,4.0200,12.0000)};
\addplot3[pathfourth, vinepath] coordinates {(5.9400,1.9800,6.0000) (5.9400,1.9800,6.0600) (13.8996,5.9004,12.0000)};
\addplot3[paththird, vinepath] coordinates {(5.9400,5.9400,6.0000) (5.9400,5.9400,6.0600) (12.0600,12.0600,12.0000)};
\addplot3[pathoptimal, vinepath] coordinates {(5.9400,16.0200,6.0000) (5.9400,16.0200,6.0600) (12.0600,16.0200,12.0000)};
\filldraw[vineobstacle, draw=white, line width=0.4pt, opacity=1] (axis cs:0,12,12) -- (axis cs:6,12,12) -- (axis cs:6,18,12) -- (axis cs:0,18,12) -- cycle;
\filldraw[vineobstacle, draw=white, line width=0.4pt, opacity=1] (axis cs:6,12,12) -- (axis cs:12,12,12) -- (axis cs:12,18,12) -- (axis cs:6,18,12) -- cycle;
\filldraw[vineobstacle, draw=white, line width=0.4pt, opacity=1] (axis cs:0,6,12) -- (axis cs:6,6,12) -- (axis cs:6,12,12) -- (axis cs:0,12,12) -- cycle;
\filldraw[vineobstacle, draw=white, line width=0.4pt, opacity=1] (axis cs:6,6,12) -- (axis cs:12,6,12) -- (axis cs:12,12,12) -- (axis cs:6,12,12) -- cycle;
\filldraw[vineobstacle, draw=white, line width=0.4pt, opacity=1] (axis cs:12,6,12) -- (axis cs:18,6,12) -- (axis cs:18,12,12) -- (axis cs:12,12,12) -- cycle;
\filldraw[vineobstacle, draw=white, line width=0.4pt, opacity=1] (axis cs:0,0,12) -- (axis cs:6,0,12) -- (axis cs:6,6,12) -- (axis cs:0,6,12) -- cycle;
\filldraw[vineobstacle, draw=white, line width=0.4pt, opacity=1] (axis cs:6,0,12) -- (axis cs:12,0,12) -- (axis cs:12,6,12) -- (axis cs:6,6,12) -- cycle;
\addplot3[pathfifth, vinepath] coordinates {(12.0600,4.0200,12.0000) (16.0200,12.0600,18.0000)};
\addplot3[pathfourth, vinepath] coordinates {(13.8996,5.9004,12.0000) (13.9800,5.9400,12.0600) (16.0200,12.0600,18.0000)};
\addplot3[paththird, vinepath] coordinates {(12.0600,12.0600,12.0000) (12.0600,12.0600,12.0600) (12.0600,12.0600,18.0000)};
\addplot3[pathoptimal, vinepath] coordinates {(12.0600,16.0200,12.0000) (12.0600,12.0992,18.0000)};
\filldraw[vineobstacle, draw=white, line width=0.4pt, opacity=1] (axis cs:6,12,18) -- (axis cs:12,12,18) -- (axis cs:12,18,18) -- (axis cs:6,18,18) -- cycle;
\filldraw[vineobstacle, draw=white, line width=0.4pt, opacity=1] (axis cs:0,6,18) -- (axis cs:6,6,18) -- (axis cs:6,12,18) -- (axis cs:0,12,18) -- cycle;
\filldraw[vineobstacle, draw=white, line width=0.4pt, opacity=1] (axis cs:6,6,18) -- (axis cs:12,6,18) -- (axis cs:12,12,18) -- (axis cs:6,12,18) -- cycle;
\filldraw[vineobstacle, draw=white, line width=0.4pt, opacity=1] (axis cs:12,6,18) -- (axis cs:18,6,18) -- (axis cs:18,12,18) -- (axis cs:12,12,18) -- cycle;
\filldraw[vineobstacle, draw=white, line width=0.4pt, opacity=1] (axis cs:0,0,18) -- (axis cs:6,0,18) -- (axis cs:6,6,18) -- (axis cs:0,6,18) -- cycle;
\filldraw[vineobstacle, draw=white, line width=0.4pt, opacity=1] (axis cs:6,0,18) -- (axis cs:12,0,18) -- (axis cs:12,6,18) -- (axis cs:6,6,18) -- cycle;
\filldraw[vineobstacle, draw=white, line width=0.4pt, opacity=1] (axis cs:12,0,18) -- (axis cs:18,0,18) -- (axis cs:18,6,18) -- (axis cs:12,6,18) -- cycle;
\addplot3[pathfifth, vinepath] coordinates {(16.0200,12.0600,18.0000) (16.0200,12.0600,18.0600) (0.0000,0.0000,24.0000)};
\addplot3[pathfourth, vinepath] coordinates {(16.0200,12.0600,18.0000) (16.0200,12.0600,18.0600) (0.0000,0.0000,24.0000)};
\addplot3[paththird, vinepath] coordinates {(12.0600,12.0600,18.0000) (12.0600,12.0600,18.0600) (0.0000,0.0000,24.0000)};
\addplot3[pathoptimal, vinepath] coordinates {(12.0600,12.0992,18.0000) (12.0600,12.0600,18.0600) (0.0000,0.0000,24.0000)};
\end{axis}
\end{tikzpicture}

%% file: figures/3D-maze-pressure.tex
% colours matched to the hardware photograph
% legend order Best / 3rd / 4th / 6th -> green / blue / orange / red,
% the same order as the 2D pressure figure. Keep these identical to (a).
\definecolor{pathoptimal}{RGB}{82,217,94}%
\definecolor{paththird}{RGB}{82,138,217}%
\definecolor{pathfourth}{RGB}{224,138,63}%
\definecolor{pathfifth}{RGB}{219,70,75}%
%% Growth pressure against deployed length for the four plotted routes
%% through the 3x3x4 maze, in kPa and metres. Ranks 1, 3, 4 and 6.
%% Model constants: Y = 0.278 psi, lambda = 0.0341 psi/ft, T = 0.787 psi, mu = 0.315.
%% The legend sits outside the axis, to its right.
\begin{tikzpicture}
\begin{axis}[
    width=0.80\linewidth, height=0.54\linewidth,
    xlabel={Deployed length (m)}, ylabel={Predicted pressure (kPa)},
    xmin=0, xmax=1.60, ymin=6.5, ymax=38.5,
    xtick={0,0.5,1.0,1.5},
    ytick={10,15,20,25,30,35},
    tick align=outside, tick pos=left,
    tick label style={font=\scriptsize}, label style={font=\small},
    grid=both, grid style={draw=black!15, line width=0.3pt},
    reverse legend,
    legend cell align=left,
    legend style={font=\small, at={(1.04,1.0)}, anchor=north west,
                  draw=none, fill=none, row sep=2pt,
                  /tikz/every even column/.append style={column sep=3pt}},
]
\addplot[pathfifth, line width=1.8pt, mark=none] coordinates {(0.00000,7.3429) (0.41539,7.6633) (0.41539,11.7577) (0.74123,12.0090) (0.74123,18.3633) (1.01517,18.5746) (1.01517,24.6051) (1.01670,24.6062) (1.01670,35.9040) (1.54790,36.3137)};
\addlegendentry{4th}
\addplot[pathfourth, line width=1.8pt, mark=none] coordinates {(0.00000,7.3429) (0.53163,7.7530) (0.53163,10.6515) (0.53316,10.6526) (0.53316,13.8152) (0.80710,14.0265) (0.80710,16.6969) (1.02984,16.8687) (1.02984,21.3159) (1.03137,21.3171) (1.03137,30.9770) (1.56257,31.3868)};
\addlegendentry{3rd}
\addplot[paththird, line width=1.8pt, mark=none] coordinates {(0.00000,7.3429) (0.45923,7.6972) (0.45923,10.4393) (0.46076,10.4405) (0.46076,13.4842) (0.72739,13.6898) (0.72739,17.8938) (0.72891,17.8949) (0.88131,18.0125) (0.88131,25.6716) (1.34004,26.0255)};
\addlegendentry{2nd}
\addplot[pathoptimal, line width=1.8pt, mark=none] coordinates {(0.00000,7.3429) (0.34582,7.6097) (0.34582,10.0038) (0.34734,10.0050) (0.34734,12.3241) (0.56397,12.4912) (0.56397,16.1726) (0.74784,16.3145) (0.74784,20.8866) (1.20657,21.2405)};
\addlegendentry{Best}
\addplot[black!55, dashed, line width=1.0pt, mark=none, forget plot] coordinates {(0,21.5) (1.60,21.5)};
\node[anchor=south west, font=\scriptsize, black!65, inner sep=2pt] at (axis cs:0.03,21.5) {burst pressure};
\end{axis}
\end{tikzpicture}

%% file: figures/theorem-1-fig.tex
\begin{tikzpicture}[>=Stealth]

  %% ---------------- parameters ----------------
  \def\pa{0.7}       % path is the parabola y = -\pa*x^2
  \def\eps{1.5}       % radius of the epsilon-ball
  \def\tzero{-0.35}   % parameter of the ball centre on the path
  \def\tmin{-1.5}     % left end of drawn path
  \def\tmax{1.5}      % right end of drawn path

  \coordinate (X) at ({\tzero},{-\pa*\tzero*\tzero});

  %% ------- locate where the path crosses the ball -------
  \path[name path=curve,domain=\tmin:\tmax,samples=300,variable=\t]
       plot ({\t},{-\pa*\t*\t});
  \path[name path=ball] (X) circle[radius=\eps];
  \path[name intersections={of=curve and ball,by={XE,XL}}];

  %% ------- tangent directions at the two crossings -------
  \path let \p1=(XE), \p2=(XL) in \pgfextra{%
    \pgfmathsetmacro{\tmpE}{\x1/28.45274}%
    \pgfmathsetmacro{\tmpL}{\x2/28.45274}%
    \xdef\tEg{\tmpE}\xdef\tLg{\tmpL}%
  };
  \pgfmathsetmacro{\angE}{atan(-2*\pa*\tEg)}   % tangent bearing at x_enter
  \pgfmathsetmacro{\angL}{atan(-2*\pa*\tLg)}   % tangent bearing at x_exit
  \pgfmathanglebetweenpoints{\pgfpointanchor{XE}{center}}%
                            {\pgfpointanchor{XL}{center}}
  \pgfmathsetmacro{\angC}{\pgfmathresult>180 ? \pgfmathresult-360 : \pgfmathresult}

  \coordinate (TE) at ($(XE)+({1.15*cos(\angE)},{1.15*sin(\angE)})$);
  \coordinate (TL) at ($(XL)+({1.15*cos(\angL)},{1.15*sin(\angL)})$);
  \coordinate (CL) at ($(XL)!-1.15cm!(XE)$);

  %% ---------------- the path ----------------
  \draw[line width=1.5pt,dashed,domain=\tmin:\tmax,samples=300,variable=\t]
       plot ({\t},{-\pa*\t*\t});
  \begin{scope}
    \clip (X) circle[radius=\eps];
    \draw[line width=1.5pt,domain=\tmin:\tmax,samples=300,variable=\t]
         plot ({\t},{-\pa*\t*\t});
  \end{scope}

  %% ---------------- the ball ----------------
  \draw (X) circle[radius=\eps];
  \draw[->] (X) -- ++({\eps*cos(140)},{\eps*sin(140)});
  \node at ($(X)+({0.55*\eps*cos(140)+0.28*cos(50)},
                  {0.55*\eps*sin(140)+0.28*sin(50)})$) {$\epsilon$};

  %% ---------------- the shortcut ----------------
  \draw[red,line width=1.5pt] (XE) -- (XL)
       node[midway,below=1pt,black] {$h$};

  %% ------- tangent guides and turning angles -------
  \draw[gray,dotted] (XE) -- (TE);
  \draw[gray,dotted] (XL) -- (TL);
  \draw[gray,dotted] (XL) -- (CL);

  % theta_1: label pushed toward the chord so it clears the curve
  \pic[draw,angle radius=3mm] {angle=XL--XE--TE};
  \pgfmathsetmacro{\angOne}{\angC+0.28*(\angE-\angC)}
  \node at ($(XE)+({0.6*cos(\angOne)},{0.85*sin(\angOne)})$) {$\theta_1$};

  % theta_2: bisector is clear here, the curve bends away from the wedge
  \pic[draw,angle radius=3mm,angle eccentricity=2.0,
       "$\theta_2$"{inner sep=1pt}] {angle=TL--XL--CL};

  %% ---------------- labels ----------------
  \fill (X)  circle[radius=2.4pt];
  \fill (XE) circle[radius=2.4pt];
  \fill (XL) circle[radius=2.4pt];
  \node[above right=2pt and -1pt] at (X)  {$x$};
  \node[above left=-5pt and 0pt]   at (XE) {$x_1$};
  \node[above right=3pt and 4pt]   at (XL) {$x_2$};
\end{tikzpicture}

%% file: figures/theorem-2-fig.tex
\begin{tikzpicture}[>=Stealth]

  %% ---------------- parameters ----------------
  \def\angIn{35}      % bearing of the incoming segment
  \def\angOut{-35}    % bearing of the outgoing segment
  \def\eps{1.5}       % radius of the epsilon-ball
  \def\Lin{2.7}       % length of the incoming segment
  \def\Lout{2.4}      % length of the outgoing segment

  \coordinate (X)  at (0,0);
  \coordinate (S)  at ({\Lin*cos(\angIn+180)},{\Lin*sin(\angIn+180)});
  \coordinate (G)  at ({\Lout*cos(\angOut)},{\Lout*sin(\angOut)});
  \coordinate (XE) at ({\eps*cos(\angIn+180)},{\eps*sin(\angIn+180)});
  \coordinate (XL) at ({\eps*cos(\angOut)},{\eps*sin(\angOut)});

  %% guide points: extension of the incoming segment past X,
  %% and extension of the chord past x_exit
  \coordinate (EXT) at ({1.15*cos(\angIn)},{1.15*sin(\angIn)});
  \coordinate (CL)  at ($(XL)!-1.15cm!(XE)$);

  %% ---------------- the path ----------------
  \draw[line width=1.5pt,dashed] (S) -- (X) -- (G);
  \begin{scope}
    \clip (X) circle[radius=\eps];
    \draw[line width=1.5pt] (S) -- (X) -- (G);
  \end{scope}

  %% ---------------- the ball ----------------
  \draw (X) circle[radius=\eps];
  \draw[->] (X) -- ++({\eps*cos(140)},{\eps*sin(140)});
  \node at ($(X)+({0.55*\eps*cos(140)+0.28*cos(50)},
                  {0.55*\eps*sin(140)+0.28*sin(50)})$) {$\epsilon$};

  %% ---------------- the shortcut ----------------
  \draw[red,line width=1.5pt] (XE) -- (XL);

  %% ------- guides and turning angles -------
  \draw[gray,dotted] (X)  -- (EXT);
  \draw[gray,dotted] (XL) -- (CL);

  \pic[draw,angle radius=7mm,angle eccentricity=1.38,
       "$\theta_1$"{inner sep=1pt}] {angle=XL--XE--X};
  \pic[draw,angle radius=3mm,angle eccentricity=1.8,
       "$\theta_3$"{inner sep=1pt}] {angle=XL--X--EXT};
  \pic[draw,angle radius=7mm,angle eccentricity=1.45,
       "$\theta_2$"{inner sep=1pt}] {angle=G--XL--CL};

  %% ---------------- labels ----------------
  \fill (X)  circle[radius=2.4pt];
  \fill (XE) circle[radius=2.4pt];
  \fill (XL) circle[radius=2.4pt];
  \node[above=3pt]                at (X)  {$x$};
  \node[above left=-2pt and 0pt]  at (XE) {$x_1$};
  \node[above right=1pt and 5pt]  at (XL) {$x_2$};

\end{tikzpicture}